\DeclareMathOperator*{\argmax}{arg\,max}
\newcommand{\xddots}{%
  \raise 4pt \hbox {.}
  \mkern 6mu
  \raise 1pt \hbox {.}
  \mkern 6mu
  \raise -2pt \hbox {.}
}
\newtheorem{theorem}{Theorem}[section]
\newtheorem{lemma}{Lemma}[section]
\newtheorem{proposition}{Proposition}[section]
\newenvironment{proofskt}{%
  \proof}{\endproof}
\title{Sparse DNNs with Improved Adversarial Robustness}
\author{
  Yiwen Guo\textsuperscript{1,\,2}\thanks{The first two authors contributed equally to this work.} \quad\ Chao Zhang\textsuperscript{3}\footnotemark[1] \quad\ Changshui Zhang\textsuperscript{2}\quad\ Yurong Chen\textsuperscript{1} \\
\textsuperscript{1} Intel Labs China\\
\textsuperscript{2} Institute for Artificial Intelligence, Tsinghua University (THUAI), \\
State Key Lab of Intelligent Technologies and Systems, \\
Beijing National Research Center for Information Science and Technology (BNRis), \\
Department of Automation, Tsinghua University \\
\textsuperscript{3} Academy for Advanced Interdisciplinary Studies, Peking University \\
  \tt\small{\{yiwen.guo, yurong.chen\}@intel.com \quad pkuzc@pku.edu.cn \quad zcs@mail.tsinghua.edu.cn} \\
}
\begin{document}

\maketitle

\begin{abstract}
Deep neural networks (DNNs) are computationally/memory-intensive and vulnerable to adversarial attacks, making them prohibitive in some real-world applications.
By converting dense models into sparse ones, pruning appears to be a promising solution to reducing the computation/memory cost.
This paper studies classification models, especially DNN-based ones, to demonstrate that there exists intrinsic relationships between their sparsity and adversarial robustness.  
Our analyses reveal, both theoretically and empirically, that nonlinear DNN-based classifiers behave differently under $l_2$ attacks from some linear ones. 
We further demonstrate that an appropriately higher model sparsity implies better robustness of nonlinear DNNs, whereas over-sparsified models can be more difficult to resist adversarial examples.
\end{abstract}

\section{Introduction}

Although deep neural networks (DNNs) have advanced the state-of-the-art of many artificial intelligence techniques, some undesired properties may hinder them from being deployed in real-world applications.
With continued proliferation of deep learning powered applications, one major concern raised recently is the heavy computation and storage burden that DNN models shall lay upon mobile platforms.
Such burden stems from substantially redundant feature representations and parameterizations~\cite{Denil2013}.
To address this issue and make DNNs less resource-intensive, a variety of solutions have been proposed.
In particular, it has been reported that more than 90\% of connections in a well-trained DNN can be removed using pruning strategies~\cite{Han2015, Guo2016, Ullrich2017, Molchanov2017, Neklyudov2017}, while no accuracy loss is observed.
Such a remarkable network sparsity leads to considerable compressions and speedups on both GPUs and CPUs~\cite{Park2017}.
Aside from being efficient, sparse representations are theoretically attractive~\cite{Candes2006, Donoho2006} and have made their way into tremendous applications over the past decade.

Orthogonal to the inefficiency issue, it has also been discovered that DNN models are vulnerable to adversarial examples---maliciously generated images which are perceptually similar to benign ones but can fool classifiers to make arbitrary predictions~\cite{Szegedy2014, Carlini2017}. 
Furthermore, generic regularizations (e.g., dropout and weight decay) do not really help on resisting adversarial attacks~\cite{Goodfellow2015}. 
Such undesirable property may prohibit DNNs from being applied to security-sensitive applications. 
The cause of this phenomenon seems mysterious and remains to be an open question. 
One reasonable explanation is the local linearity of modern DNNs~\cite{Goodfellow2015}.
Quite a lot of attempts, including adversarial training~\cite{Goodfellow2015, Tramer2018, Madry2018}, knowledge distillation~\cite{Papernot2016}, detecting and rejecting~\cite{Lu2017}, and some gradient masking techniques like randomization~\cite{Xie2018}, have been made to ameliorate this issue and defend adversarial attacks.

It is crucial to study potential relationships between the inefficiency (i.e., redundancy) and adversarial robustness of classifiers, in consideration of the inclination to avoid ``robbing Peter to pay Paul'', if possible.
Towards shedding light on such relationships, especially for DNNs, we provide comprehensive analyses in this paper from both the theoretical and empirical perspectives. 
By introducing reasonable metrics, we reveal, somewhat surprising, that there is a discrepancy between the robustness of sparse linear classifiers and nonlinear DNNs, under $l_2$ attacks.
Our results also demonstrate that an appropriately higher sparsity implies better robustness of nonlinear DNNs, whereas over-sparsified models can be more difficult to resist adversarial examples, under both the $l_\infty$ and $l_2$ circumstances.

\section{Related Works}

In light of the ``Occam’s razor'' principle, we presume there exists intrinsic relationships between the sparsity and robustness of classifiers, and thus perform a comprehensive study in this paper. 
Our theoretical and empirical analyses shall cover both linear classifiers and nonlinear DNNs, in which the middle-layer activations and connection weights can all become sparse.

The (in)efficiency and robustness of DNNs have seldom been discussed together, especially from a theoretical point of view.
Very recently, Gopalakrishnan et al.~\cite{Gopalakrishnan2018, Marzi2018} propose to sparsify the input representations as a defense and provide provable evidences on resisting $l_\infty$ attacks. 
Though intriguing, their theoretical analyses are limited to only linear and binary classification cases.
Contemporaneous with our work, Wang et al.~\cite{Wang2018} and Ye et al.~\cite{Ye2018} experimentally discuss how pruning shall affect the robustness of some DNNs but surprisingly draw opposite conclusions. 
Galloway et al.~\cite{Galloway2018} focus on binary DNNs instead of the sparse ones and show that the difficulty of performing adversarial attacks on binary networks DNNs remains as that of training. 

To some extent, several very recent defense methods also utilize the sparsity of DNNs.
For improved model robustness, Gao et al.~\cite{Gao2017} attempt to detect the feature activations exclusive to the adversarial examples and prune them away. 
Dhillon et al.~\cite{Dhillon2018} choose an alternative way that prunes activations stochastically to mask gradients.
These methods focus only on the sparsity of middle-layer activations and pay little attention to the sparsity of connections.

\section{Sparsity and Robustness of Classifiers}\label{sec:sob}

This paper aims at analyzing and exploring potential relationships between the sparsity and robustness of classifiers to untargeted white-box adversarial attacks, from both theoretical and practical perspectives.
To be more specific, we consider models which learn parameterized mappings $\mathbf x_i\mapsto y_i$, when given a set of labelled training samples $\{(\mathbf x_i, y_i)\}$ for supervision.
Similar to a bunch of other theoretical efforts, our analyses start from linear classifiers and will be generalized to nonlinear DNNs later in Section~\ref{subsec:dnn}.

Generally, the sparsity of a DNN model can be considered in two aspects: the sparsity of connections among neurons and the sparsity of neuron activations.
In particular, the sparsity of activations also include that of middle-layer activations and inputs, which can be treated as a special case.
Knowing that the input sparsity has been previously discussed~\cite{Gopalakrishnan2018}, we shall focus primarily on the weight and activation sparsity for nonlinear DNNs and just study the weight sparsity for linear models.

\subsection{Linear Models}\label{subsec:lin}

For simplicity of notation, we first give theoretical results for binary classifiers with $\hat{y}_i=\mathrm{sgn}(\mathbf w^T \mathbf x_i)$, in which $\mathbf{w, x}_i \in \mathbb R^n$.
We also ignore the bias term $b$ for clarity. 
Notice that $\mathbf w^T \mathbf x+b$ can be simply rewritten as $\acute{\mathbf w}^T[\mathbf x;1]$ in which $\acute{\mathbf w}=[\mathbf w; b]$, so all our theoretical results in the sequel apply directly to linear cases with bias.
Given ground-truth labels $y_i\in \{+1, -1\}$, a classifier can be effectively trained by minimizing some empirical loss $\sum_i \tau(-y_i\cdot \mathbf w^T\mathbf x_i)$ using a logistic sigmoid function like softplus: $\tau(\cdot)=\log (1 + \exp(\cdot))$~\cite{Goodfellow2015}.

Adversarial attacks typically minimize an $l_p$ norm (e.g., $l_2$, $l_\infty$, $l_1$ and $l_0$) of the required perturbation under certain (box) constraints.
Though not completely equivalent with the distinctions in our visual domain, such norms play a crucial role in evaluating adversarial robustness.
We study both the $l_\infty$ and $l_2$ attacks in this paper.
With an ambition to totalize them, we propose to evaluate the robustness of linear models using the following metrics that describe the ability of resisting them respectively:
\begin{equation}
\begin{split}
\textbf{Binary}:\ r_\infty &:=\mathrm{E}_{\mathbf x,y} \left(1_{y = \mathrm{sgn}(\mathbf w^T \check{\mathbf x})}\right),  \\
r_2\ &:= \mathrm{E}_{\mathbf x,y} \left(1_{y = \hat{y}}\cdot d(\mathbf x, \tilde{\mathbf x}) \right).
\end{split}
\end{equation}
Here we let $\check{\mathbf x}=\mathbf x - \epsilon y \cdot \mathrm{sgn}(\mathbf w)$ and $\tilde{\mathbf x}=\mathbf x - \mathbf w (\mathbf w^T \mathbf x)/\|\mathbf w\|^2_2$ be the adversarial examples generated by applying the fast gradient sign (FGS)~\cite{Goodfellow2015} and DeepFool~\cite{Moosavi2016} methods as representatives. 
Without box constraints on the image domain, they can be regarded as the optimal $l_\infty$ and $l_2$ attacks targeting on the linear classifiers~\cite{Marzi2018, Moosavi2016}.
Function $d$ calculates the Euclidean distance between two $n$-dimensional vectors and we know that $d(\mathbf x, \tilde{\mathbf x})=|\mathbf w^T \mathbf x|/\|\mathbf w\|_2$.

The introduced two metrics evaluate robustness of classifiers from two different perspectives: $r_\infty$ calculates the expected accuracy on (FGS) adversarial examples and $r_2$ measures a decision margin between benign examples from the two classes.
For both of them, higher value indicates stronger adversarial robustness.
Note that unlike some metrics calculating (maybe normalized) Euclidean distances between all pairs of benign and adversarial examples, our $r_2$ omits the originally misclassfied examples, which makes more sense if the classifiers are imperfect in the sense of prediction accuracy. 
We will refer to $\bm{\mu}_k:=\mathrm{E}(\mathbf x|y=k,\hat{y}=k)$, which is the conditional expectation for class $k$.

Be aware that although there exists attack-agnostic guarantees on the model robustness~\cite{Hein2017, Weng2018}, they are all instance-specific.
Instead of generalizing them to the entire input space for analysis, we focus on the proposed statistical metrics and present their connections to the guarantees later in Section~\ref{subsec:dnn}.
Some other experimentally feasible metrics shall be involved in Section~\ref{sec:exp}.
The following theorem sheds light on intrinsic relationships between the described robustness metrics and $l_p$ norms of $\mathbf w$.

\begin{theorem}~\label{theo:0}(The sparsity and robustness of binary linear classifiers). 
Suppose that $\mathrm{P}_y(k)=1/2$ for $k=\pm1$, and an obtained linear classifier achieves the same expected accuracy $t$ on different classes, then we have
\begin{equation}\label{eq:0}
r_2=\frac{t}{2}\cdot\frac{\mathbf w^T (\bm{\mu}_{+1}-\bm{\mu}_{-1})}{\|\mathbf w\|_2} \quad \text{and} \quad r_\infty\leq \frac{t}{2}\cdot\frac{\mathbf w^T (\bm{\mu}_{+1}-\bm{\mu}_{-1})}{\epsilon \|\mathbf w\|_1}.
\end{equation}
\end{theorem}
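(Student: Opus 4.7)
The plan is to reduce both $r_2$ and $r_\infty$ to expectations of the signed margin $y\mathbf{w}^T\mathbf{x}$, and then invoke the conditional expectations $\bm{\mu}_{\pm 1}$ together with the per-class accuracy $t$.

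First I would handle $r_2$. Conditioning on $y$ and using $P_y(\pm 1)=1/2$, I write
\begin{equation*}
r_2 = \frac{1}{2\|\mathbf{w}\|_2}\sum_{k\in\{\pm 1\}} \mathrm{E}_{\mathbf{x}\mid y=k}\bigl(\mathbf{1}_{\hat{y}=k}\cdot|\mathbf{w}^T\mathbf{x}|\bigr).
\end{equation*}
On the event $\{y=k,\hat y=k\}$ the sign of $\mathbf{w}^T\mathbf{x}$ is $k$, so $|\mathbf{w}^T\mathbf{x}|=k\,\mathbf{w}^T\mathbf{x}$. Pulling $\mathbf{w}^T$ out of the expectation and using the defining identity $\mathrm{E}_{\mathbf{x}\mid y=k}(\mathbf{1}_{\hat{y}=k}\cdot\mathbf{x})=P(\hat{y}=k\mid y=k)\,\bm{\mu}_k=t\,\bm{\mu}_k$, the two terms combine into $t\,\mathbf{w}^T(\bm{\mu}_{+1}-\bm{\mu}_{-1})$, giving the claimed equality.

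Next I would turn to $r_\infty$. A short algebraic simplification shows $\mathbf{w}^T\check{\mathbf{x}}=\mathbf{w}^T\mathbf{x}-\epsilon y\|\mathbf{w}\|_1$, so the event $\{y=\mathrm{sgn}(\mathbf{w}^T\check{\mathbf{x}})\}$ is equivalent to $\{y\mathbf{w}^T\mathbf{x}>\epsilon\|\mathbf{w}\|_1\}$. Since $\epsilon\|\mathbf{w}\|_1>0$, a Markov-type bound on the nonnegative part gives
\begin{equation*}
\mathbf{1}_{\{y\mathbf{w}^T\mathbf{x}>\epsilon\|\mathbf{w}\|_1\}}\ \leq\ \frac{(y\mathbf{w}^T\mathbf{x})_+}{\epsilon\|\mathbf{w}\|_1}.
\end{equation*}
Taking expectations and observing that $(y\mathbf{w}^T\mathbf{x})_+ = \mathbf{1}_{y=\hat{y}}\cdot y\mathbf{w}^T\mathbf{x}$ (the margin is positive exactly on correctly classified samples), the numerator reduces to the very same expression $\tfrac{t}{2}\mathbf{w}^T(\bm{\mu}_{+1}-\bm{\mu}_{-1})$ that I computed for $r_2$, yielding the stated upper bound.

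The two main subtleties I expect are bookkeeping rather than deep: (i) making sure the sign identity $|\mathbf{w}^T\mathbf{x}|=y\mathbf{w}^T\mathbf{x}$ is only invoked on the correctly classified event, so that the indicator $\mathbf{1}_{y=\hat y}$ is needed to keep the Markov step tight; and (ii) correctly converting $\mathrm{E}_{\mathbf{x}\mid y=k}(\mathbf{1}_{\hat y=k}\cdot\mathbf{x})$ into $t\bm{\mu}_k$, which uses both the equal per-class accuracy hypothesis and the definition of $\bm{\mu}_k$ as a conditional expectation on $\{y=k,\hat y=k\}$. Everything else is rearrangement.
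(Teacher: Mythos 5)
Your proposal is correct and follows essentially the same route as the paper: both parts reduce to the conditional mean margin $k\,\mathbf w^T\bm\mu_k$ via the law of total expectation, and the $r_\infty$ bound comes from Markov's inequality (your pointwise bound $\mathbf 1_{\{X>a\}}\leq X_+/a$ followed by taking expectations is just Markov's inequality applied after the same class-conditional decomposition the paper uses). The only cosmetic difference is that the paper conditions on $\{y=k,\hat y=k\}$ first and then applies Markov to each conditional law, whereas you fold the correct-classification event into the positive part $(y\,\mathbf w^T\mathbf x)_+$ before taking expectations; the computation is identical.
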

\begin{proof}
For $r_\infty$, we first rewrite it in the form of $\mathrm{Pr}(y\cdot \mathbf w^T\check{\mathbf x} > 0)$. We know from assumptions that $\mathrm{Pr}(\hat{y}=k|y=k)=t$ and $\mathrm{Pr}(y=k)=1/2$, so we further get 
\begin{equation}\label{eq:1}
r_\infty = \sum_{k=\pm1} \frac{t}{2}\,\mathrm{Pr}\,(k\cdot\mathbf w^T\mathbf x>\epsilon\|\mathbf w\|_1 |\, y=k, \hat{y}=k),
\end{equation}
by using the law of total probability and substituting $\check{\mathbf x}$ with $\mathbf x - \epsilon y \cdot \mathrm{sgn}(\mathbf w)$. Lastly the result follows after using the Markov's inequality.

As for $r_2$, the proof is straightforward by similarly casting its definition into the sum of conditional expectations. That is,
\begin{equation}\label{eq:2}
r_2 = \sum_{k=\pm1} \frac{t}{2}\,\mathrm{E}_{\mathbf x|y, \hat{y}}\left(\left.\frac{|\mathbf w^T \mathbf x|}{\|\mathbf w\|_2} \right| y=k, \hat{y}=k\right).
\end{equation}\vskip -0.25in
\end{proof}

Theorem~\ref{theo:0} indicates clear relationships between the sparsity and robustness of linear models. 
In terms of $r_\infty$, optimizing the problem gives rise to a sparse solution of $\mathbf w$.
By duality, maximizing the squared upper bound of $r_\infty$ also resembles solving a sparse PCA problem~\cite{d2008}.
Reciprocally, we might also concur that a highly sparse $\mathbf w$ implies relatively robust classification results.
Nevertheless, it seems that the defined $r_2$ has nothing to do with the sparsity of $\mathbf w$.
It gets maximized iff $\mathbf w$ approaches $\bm{\mu}_{+1}-\bm{\mu}_{-1}$ or $\bm{\mu}_{-1}-\bm{\mu}_{+1}$, however, sparsifying $\mathbf w$ probably does not help on reaching this goal.
In fact, under some assumptions about data distributions, the dense reference model can be nearly optimal in the sense of $r_2$.
We will see this phenomenon remains in multi-class linear classifications in Theorem~\ref{theo:1} but does not remain in nonlinear DNNs in Section~\ref{subsec:dnn}.
One can check Section~\ref{subsec:explin} and~\ref{subsec:expdnn} for some experimental discussions in more details.

Having realized that the $l_\infty$ robustness of binary linear classifiers is closely related to $\|\mathbf w\|_1$, we now turn to multi-class cases with the ground truth $y_i\in \{1,\ldots,c\}$ and prediction $\hat{y}_i=\argmax_k (\mathbf w^T_k \mathbf x_i)$, in which $\mathbf w_k=W[:,k]$ indicates the $k$-th column of a matrix $W\in \mathbb R^{n\times c}$. 
Here the training objective $f$ calculates the cross-entropy loss between ground truth labels and outputs of a softmax function. 
The introduced two metrics shall be slightly modified to:
\begin{equation}
\begin{split}
\textbf{Multi-class}:\ r_\infty &:=\mathrm{E}_{\mathbf x,y} \left(1_{y = \argmax_k(\mathbf w_k^T \check{\mathbf x})}\right),\\
r_2\ &:= \mathrm{E}_{\mathbf x,y} \left(1_{y = \hat{y}}\cdot d(\mathbf x, \tilde{\mathbf x}) \right).
\end{split}
\end{equation}
Likewise, $\check{\mathbf x}=\mathbf x + \epsilon \cdot \mathrm{sgn}(\nabla f(\mathbf x))$ and $\tilde{\mathbf x}=\mathbf x-\mathbf w_\delta(\mathbf w_\delta^T\mathbf x)/\|\mathbf w_\delta\|^2_2$ are the FGS and DeepFool adversarial examples under multi-class circumstances, in which $\mathbf w_\delta=\mathbf w_{\hat{y}}-\mathbf w_{e}$ and $e\in \{1,\ldots, c\}-\{\hat{y}\}$ is carefully chosen such that $ |(\mathbf w_{\hat{y}}-\mathbf w_e)^T\mathbf x|/\|\mathbf w_{\hat{y}}-\mathbf w_e\|_2$ is minimized.
Denote an averaged classifier by $\bar {\mathbf w}:=\sum_k \mathbf w_k/c$, we provide upper bounds for both $r_\infty$ and $r_2$ in the following theorem.
\begin{theorem}~\label{theo:1}(The sparsity and robustness of multi-class linear classifiers).
Suppose that $\mathrm{P}_y(k)=1/c$ for $k\in \{1,...,c\}$, and an obtained linear classifier achieves the same expected accuracy $t$ on different classes, then we have
\begin{equation}
r_2 \leq \frac{t}{c} \sum_{k=1}^c \frac{(\mathbf w_k-\bar{\mathbf w})^T \bm{\mu}_k}{\|\mathbf w_k-\bar{\mathbf w}\|_2}\quad \text{and} \quad r_\infty \leq \frac{t}{c}\sum_{k=1}^c \frac{(\mathbf w_k-\bar{\mathbf w})^T \bm{\mu}_k}{\epsilon \|\mathbf w_k-\bar{\mathbf w}\|_1} 
\end{equation}
under two additional assumptions: (\uppercase\expandafter{\romannumeral1}) FGS achieves higher per-class success rates than a weaker perturbation like $-\epsilon \cdot \mathrm{sgn}(\mathbf w_y - \bar{\mathbf w})$, (\uppercase\expandafter{\romannumeral2}) the FGS perturbation does not correct misclassifications. 
\end{theorem}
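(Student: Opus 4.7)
The plan is to mirror the structure of the proof of Theorem~\ref{theo:0}: decompose $r_\infty$ and $r_2$ via the law of total probability into a sum over classes of conditional expectations given $y=k$ and $\hat y=k$, exploit the equal-accuracy assumption to extract a common factor $t/c$, and then upper-bound each conditional term by something involving $\mathbf w_k-\bar{\mathbf w}$.

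For $r_\infty$, I would first invoke assumption~(\uppercase\expandafter{\romannumeral2}) so that $1_{y=\arg\max_k\mathbf w_k^T\check{\mathbf x}}=1_{y=\hat y}\cdot 1_{y=\arg\max_k\mathbf w_k^T\check{\mathbf x}}$; this lets me condition on $\hat y=k$ alongside $y=k$. After decomposition the per-class term becomes $\mathrm{Pr}(k=\arg\max_j \mathbf w_j^T\check{\mathbf x}\mid y=k,\hat y=k)$. By assumption~(\uppercase\expandafter{\romannumeral1}) this probability is no larger than the corresponding probability for the weaker perturbation $\check{\mathbf x}'=\mathbf x-\epsilon\,\mathrm{sgn}(\mathbf w_k-\bar{\mathbf w})$. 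Next I would use the elementary observation that $k=\arg\max_j\mathbf w_j^T\mathbf z$ implies $\mathbf w_k^T\mathbf z\ge \bar{\mathbf w}^T\mathbf z$ (by averaging the pairwise inequalities), so the event shrinks to $(\mathbf w_k-\bar{\mathbf w})^T\mathbf x\ge \epsilon\|\mathbf w_k-\bar{\mathbf w}\|_1$. Since conditioning on $\hat y=k$ forces $(\mathbf w_k-\bar{\mathbf w})^T\mathbf x\ge 0$, a direct application of Markov's inequality produces exactly the advertised bound.

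For $r_2$, the same conditional decomposition gives per-class terms $\mathrm{E}(d(\mathbf x,\tilde{\mathbf x})\mid y=k,\hat y=k)$. Writing $\phi_e:=(\mathbf w_k-\mathbf w_e)^T\mathbf x$ and $\psi_e:=\|\mathbf w_k-\mathbf w_e\|_2$, the DeepFool distance is $\min_{e\neq k}\phi_e/\psi_e$, and on the conditioning event all $\phi_e$ are nonnegative. The key identity is $\sum_{e\neq k}(\mathbf w_k-\mathbf w_e)=c(\mathbf w_k-\bar{\mathbf w})$, which rewrites $\sum_{e\neq k}\phi_e = c(\mathbf w_k-\bar{\mathbf w})^T\mathbf x$. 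Combining the standard bound $\min_e\phi_e/\psi_e\le(\sum_e\phi_e)/(\sum_e\psi_e)$ with the triangle inequality $\sum_{e\neq k}\psi_e\ge\|\sum_{e\neq k}(\mathbf w_k-\mathbf w_e)\|_2=c\|\mathbf w_k-\bar{\mathbf w}\|_2$ yields $d(\mathbf x,\tilde{\mathbf x})\le(\mathbf w_k-\bar{\mathbf w})^T\mathbf x/\|\mathbf w_k-\bar{\mathbf w}\|_2$; taking the conditional expectation produces the desired bound.

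I expect the main obstacle to be the $r_2$ step, because unlike the binary case there is no single competing hyperplane, and the averaged direction $\bar{\mathbf w}$ is not literally one of the class vectors. The trick that makes it work is the pairing of $\min\le(\sum\phi_e)/(\sum\psi_e)$ with Jensen/triangle inequality on the $\psi_e$, which lets the averaging in $\bar{\mathbf w}$ appear on \emph{both} numerator and denominator simultaneously. For $r_\infty$, the analogous reduction is cleaner because assumption~(\uppercase\expandafter{\romannumeral1}) supplies the averaged direction by hand, and the remainder is just Markov; the only subtlety is that assumption~(\uppercase\expandafter{\romannumeral2}) is needed so that $r_\infty$ can legitimately be conditioned on $\hat y=k$ before invoking the weaker perturbation.
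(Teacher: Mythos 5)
Your proof is correct and follows what is evidently the intended route: assumption~(\uppercase\expandafter{\romannumeral2}) licenses the conditioning on $\hat y=k$, assumption~(\uppercase\expandafter{\romannumeral1}) swaps FGS for the perturbation $-\epsilon\,\mathrm{sgn}(\mathbf w_k-\bar{\mathbf w})$, the averaged inequality $(\mathbf w_k-\bar{\mathbf w})^T\check{\mathbf x}'\ge 0$ reduces the argmax event to the binary-style event, and Markov's inequality finishes $r_\infty$ exactly as in Theorem~\ref{theo:0}; for $r_2$, the identity $\sum_{e\neq k}(\mathbf w_k-\mathbf w_e)=c(\mathbf w_k-\bar{\mathbf w})$ combined with the mediant inequality and the triangle inequality is a valid (and clean) way to make $\bar{\mathbf w}$ appear in both numerator and denominator. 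The paper's own proof is deferred to its supplement, but your argument is a faithful multi-class extension of its binary proof and uses both auxiliary assumptions precisely where they are needed.
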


We present in Theorem~\ref{theo:1} similar bounds for multi-class classifiers to that provided in Theorem~\ref{theo:0}, under some mild assumptions.
Our proof is deferred to the supplementary material. 
We emphasize that the two additional assumptions are intuitively acceptable. 
First, increasing the classification loss in a more principled way, say using FGS, ought to diminish the expected accuracy more effectively.
Second, with high probability, an original misclassification cannot be fixed using the FGS method, as one intends to do precisely the opposite.

Similarly, the presented bound for $r_\infty$ also implies sparsity, though it is the sparsity of $\mathbf w_k - \bar{\mathbf w}$. 
In fact, this is directly related with the sparsity of $\mathbf w_k$, considering that the classifiers can be post-processed to subtract their average simultaneously whilst the classification decision won't change for any possible input.
Particularly, Theorem~\ref{theo:1} also partially suits linear DNN-based classifications.
Let the classifier $g_k$ be factorized in a form of $\mathbf w^T_k=(\mathbf w_k')^TW^T_{d-1}\ldots W^T_1$, it is evident to see that higher sparsity of the multipliers encourages higher probability of a sparse $\mathbf w_k$.

\subsection{Deep Neural Networks}\label{subsec:dnn}

A nonlinear feedforward DNN is usually specified by a directed acyclic graph $G=(\mathcal V,\mathcal E)$~\cite{Cisse2017} with a single root node for final outputs.
According to the forward propagation rule, the activation value of each internal (and also output) node is calculated based on its incoming nodes and learnable weights corresponding to the edges.
Nonlinear activation functions are incorporated to ensure the capacity.
With biases, some nodes output a special value of one. 
We omit them for simplicity reasons as before.

Classifications are performed by comparing the prediction scores corresponding to different classes, which means $\hat{y} = \argmax_{k\in\{1,\ldots,c\}} g_k(\mathbf x)$.
Benefit from some very recent theoretic efforts~\cite{Hein2017, Weng2018}, we can directly utilize well-established robustness guarantees for nonlinear DNNs.
Let us first denote by $B_p(\mathbf x, R)$ a close ball centred at $\mathbf x$ with radius $R$ and then denote by $L_{q,\mathbf x}^k$ the (best) local Lipschitz constant of function $g_{\hat y}(\mathbf x)-g_k(\mathbf x)$ over a fixed $B_p(\mathbf x, R)$, if there exists one.  
It has been proven that the following lemma offers a reasonable lower bound for the required $l_p$ norm of instance-specific perturbations when all classifiers are Lipschitz continuous~\cite{Weng2018}.

\begin{proposition}\label{lemm:1}\textup{\cite{Weng2018}}
Let $\hat{y} = \argmax_{k\in\{1,\ldots,c\}} g_k(\mathbf x)$ and $\frac{1}{p} + \frac{1}{q} = 1$, then for any $\Delta_{\mathbf x} \in B_p(\mathbf 0, R)$, $p \in \mathbb R^+$ and a set of Lipschitz continuous functions $\{g_k: \mathbb R^n\mapsto \mathbb R\}$, with
\begin{equation}
\|\Delta_{\mathbf x}\|_p \leq \min\left\{\min \limits_{k \neq \hat{y}} \frac{g_{\hat{y}}(\mathbf x)-g_k(\mathbf x)}{L_{q,\mathbf x}^k}, R\right\} := \gamma,
\end{equation}
it holds that $\hat{y} = \argmax_{k\in\{1,\ldots,c\}} g_k(\mathbf x+\Delta_{\mathbf x})$, which means the classification decision does not change on $B_p(\mathbf x, \gamma)$.
\end{proposition}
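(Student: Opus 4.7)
The plan is to reduce the claim to a direct application of the local Lipschitz bound, applied separately to each pairwise margin function. Concretely, for every $k\neq\hat y$ I would define
\begin{equation}
h_k(\mathbf z) := g_{\hat y}(\mathbf z)-g_k(\mathbf z),
\end{equation}
so that $L_{q,\mathbf x}^k$ is, by hypothesis, a Lipschitz constant of $h_k$ on $B_p(\mathbf x,R)$ with respect to the $l_p$-norm (the subscript $q$ reflects the Hölder-dual side of the gradient bound). Since $\hat y=\argmax_k g_k(\mathbf x)$, we have $h_k(\mathbf x)\geq 0$ for every $k\neq\hat y$, with strict inequality whenever the argmax is unique.

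The first step is to verify that the perturbed point lies in the region where the Lipschitz estimate is valid. From the definition $\gamma\leq R$, and the hypothesis $\|\Delta_{\mathbf x}\|_p\leq\gamma$, one immediately gets $\mathbf x+\Delta_{\mathbf x}\in B_p(\mathbf x,R)$. The second step is to apply the Lipschitz bound to each $h_k$:
\begin{equation}
h_k(\mathbf x+\Delta_{\mathbf x})\;\geq\;h_k(\mathbf x)-L_{q,\mathbf x}^k\|\Delta_{\mathbf x}\|_p.
\end{equation}
Because $\|\Delta_{\mathbf x}\|_p\leq\gamma\leq (g_{\hat y}(\mathbf x)-g_k(\mathbf x))/L_{q,\mathbf x}^k=h_k(\mathbf x)/L_{q,\mathbf x}^k$, the right-hand side is non-negative, so $g_{\hat y}(\mathbf x+\Delta_{\mathbf x})\geq g_k(\mathbf x+\Delta_{\mathbf x})$ for every $k\neq\hat y$. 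Since the index $k$ was arbitrary, $\hat y$ remains an argmax on the whole ball $B_p(\mathbf x,\gamma)$.

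There is essentially no hard step here: the proposition is a ball-wise margin-preservation statement, and once the pairwise decomposition is made the work is purely algebraic. The only genuinely delicate point is the boundary case in which $h_k(\mathbf x)=0$ for some $k\neq\hat y$ (a tie at the unperturbed point) or in which $L_{q,\mathbf x}^k=0$; in the former case the bound becomes vacuous with $\gamma=0$, and in the latter $h_k$ is constant on the ball so the conclusion holds trivially. Both can be handled by the usual convention of treating $0/0$ as $+\infty$ inside the minimum defining $\gamma$, which is how the original statement in~\cite{Weng2018} implicitly reads. No additional machinery beyond the definition of a local Lipschitz constant is required.
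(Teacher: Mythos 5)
Your argument is correct: the pairwise decomposition into margin functions $h_k=g_{\hat y}-g_k$, the check that $\mathbf x+\Delta_{\mathbf x}$ stays inside $B_p(\mathbf x,R)$ because $\gamma\leq R$, and the one-line Lipschitz estimate $h_k(\mathbf x+\Delta_{\mathbf x})\geq h_k(\mathbf x)-L_{q,\mathbf x}^k\|\Delta_{\mathbf x}\|_p\geq 0$ together give exactly the claimed margin preservation. The paper itself supplies no proof of this proposition --- it is imported verbatim from \cite{Weng2018} --- and your derivation is the standard one used there, including the correct reading of the subscript $q$ as the H\"older-dual index and the sensible handling of the degenerate cases $h_k(\mathbf x)=0$ and $L_{q,\mathbf x}^k=0$.
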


Here the introduced $\gamma$ is basically an instance-specific lower bound that guarantees the robustness of multi-class classifiers. 
We shall later discuss its connections with our $r_p$s, for $p\in\{\infty, 2\}$, and now we try providing a local Lipschitz constant (which may not be the smallest) of function $g_{\hat y}(\mathbf x)-g_k(\mathbf x)$, to help us delve deeper into the robustness of nonlinear DNNs.
Without loss of generality, we will let the following discussion be made under a fixed radius $R>0$ and a given instance $\mathbf x\in \mathbb R^n$.

Some modern DNNs can be structurally very complex.
Let us simply consider a multi-layer perceptron (MLP) parameterized by a series of weight matrices $W_1\in \mathbb R^{n_0\times n_1},\ldots, W_d\in \mathbb R^{n_{d-1}\times n_{d}}$, in which $n_0=n$ and $n_d=c$. 
Discussions about networks with more advanced architectures like convolutions, pooling and skip connections can be directly generalized~\cite{Bartlett2017}.
Specifically, we have
\begin{equation}
g_k(\mathbf x_i) = \ \mathbf w^T_k\sigma(W^T_{d-1}\sigma(\ldots\sigma(W^T_1\mathbf x_i))),
\end{equation} 
in which $\mathbf w_k=W_d[:, k]$ and $\sigma$ is the nonlinear activation function.
Here we mostly focus on ``ReLU networks'' with rectified-linear-flavoured nonlinearity, so the neuron activations in middle-layers are naturally sparse. 
Due to clarity reasons, we discuss the weight and activation sparsities separately. 
Mathematically, we let $\mathbf a_0=\mathbf x$ and $\mathbf a_j=\sigma (W^T_j \mathbf a_{j-1})$ for $0< j< d$ be the layer-wise activations. 
We will refer to
\vskip -0.2in
\begin{equation}
D_j(\mathbf x) := \mathrm{diag}\left(1_{W_j[:,1]^T \mathbf a_{j-1} > 0},\ldots,1_{W_j[:,n_{j}]^T \mathbf a_{j-1} > 0}\right),
\end{equation}
\vskip -0.01in
which is a diagonal matrix whose entries taking value one correspond to nonzero activations within the $j$-th layer, and $M_j\in\{0,1\}^{n_{j-1}\times n_j}$, which is a binary mask corresponding to each (possibly sparse) $W_j$.
Along with some analyses, the following lemma and theorem present intrinsic relationships between the adversarial robustness and (both weight and activation) sparsity of nonlinear DNNs. \vskip 0.06in

\begin{lemma}\label{lemm:2}(A local Lipschitz constant for ReLU networks).
Let $\frac{1}{p} + \frac{1}{q} = 1$, then for any $\mathbf x \in \mathbb R^n$, $k\in\{1,\ldots,c\}$ and $q \in \{1, 2\}$, the local Lipschitz constant of function $g_{\hat{y}}(\mathbf x)-g_k(\mathbf x)$ satisfies  
\vskip -0.15in
\begin{equation}
L_{q,\mathbf x}^k \leq \|\mathbf w_{\hat{y}} - \mathbf w_k\|_q  \sup \limits_{\mathbf x' \in B_p(\mathbf x, R)} \prod_{j=1}^{d-1} \left(\|D_j(\mathbf x')\|_p\|W_j\|_p\right). 
\end{equation}
\end{lemma}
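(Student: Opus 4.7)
The plan is to exploit the fact that a ReLU network is piecewise affine, so $g_{\hat y}-g_k$ is everywhere differentiable except on a finite union of hyperplanes, and on each piece the gradient has an explicit product form. Since for a scalar function on the convex set $B_p(\mathbf x,R)$ the best local Lipschitz constant with respect to $\|\cdot\|_p$ equals the supremum of the $l_q$ norm of the gradient at differentiable points (with $1/p+1/q=1$), it suffices to bound this gradient uniformly.

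First I would unroll the recursion $\mathbf a_j=\sigma(W_j^T\mathbf a_{j-1})$ and write $g_{\hat y}(\mathbf x')-g_k(\mathbf x')=(\mathbf w_{\hat y}-\mathbf w_k)^T\mathbf a_{d-1}(\mathbf x')$. At any $\mathbf x'$ where no preactivation is exactly zero, each ReLU is locally either the identity or zero, so by the chain rule the Jacobian of $\mathbf a_{d-1}$ factorises as $D_{d-1}(\mathbf x')W_{d-1}^T\cdots D_1(\mathbf x')W_1^T$, and consequently
\begin{equation*}
\nabla(g_{\hat y}-g_k)(\mathbf x')=W_1D_1(\mathbf x')W_2D_2(\mathbf x')\cdots W_{d-1}D_{d-1}(\mathbf x')(\mathbf w_{\hat y}-\mathbf w_k).
\end{equation*}
Then I would apply submultiplicativity of the induced operator norms, using $\|A\mathbf v\|_q\le\|A\|_{q\to q}\|\mathbf v\|_q$ factor by factor. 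The restriction $q\in\{1,2\}$ is exactly what makes the bound readable as $\|W_j\|_p$: for $q=2$ the induced $2\to2$ and $p\to p$ norms coincide (spectral norm), and for $q=1$, $p=\infty$ the identity $\|A\|_{1\to1}=\|A^T\|_{\infty\to\infty}$ lets the backward product of $W_j$'s be estimated by the forward operator norms $\|W_j\|_\infty$. The diagonal factors $D_j(\mathbf x')$ satisfy $\|D_j(\mathbf x')\|_{q\to q}=\|D_j(\mathbf x')\|_{p\to p}$ trivially because they are symmetric $0/1$-diagonal. This produces the pointwise bound
\begin{equation*}
\|\nabla(g_{\hat y}-g_k)(\mathbf x')\|_q\le\|\mathbf w_{\hat y}-\mathbf w_k\|_q\prod_{j=1}^{d-1}\|D_j(\mathbf x')\|_p\|W_j\|_p.
\end{equation*}

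Finally I would promote this pointwise gradient estimate to a Lipschitz estimate on the whole ball. Given any $\mathbf x',\mathbf x''\in B_p(\mathbf x,R)$ the segment $[\mathbf x',\mathbf x'']$ is contained in the convex ball and is crossed by only finitely many of the non-differentiability hyperplanes; on each subsegment $g_{\hat y}-g_k$ is smooth, so piecing together the fundamental theorem of calculus and applying Hölder's inequality gives $|(g_{\hat y}-g_k)(\mathbf x'')-(g_{\hat y}-g_k)(\mathbf x')|\le\|\mathbf x''-\mathbf x'\|_p\sup_{\mathbf x'''\in B_p(\mathbf x,R)}\|\nabla(g_{\hat y}-g_k)(\mathbf x''')\|_q$. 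Taking the supremum of the product bound over $\mathbf x'''$ closes the argument.

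The main obstacle I expect is not any single calculation but the careful identification that lets the dual pair $(p,q)$ collapse into the single symbol $\|W_j\|_p$ in the statement; one has to be explicit that for backward propagation of the $l_q$ norm through the Jacobian transpose, the correct operator norms on the factors are $\|W_j\|_p$ only because of the self-duality at $p=q=2$ and the transpose identity at $(p,q)=(\infty,1)$. A secondary, minor, technicality is handling the measure-zero set of kinks when invoking the mean value step; this is standard for piecewise affine maps but worth noting to keep the proof rigorous.
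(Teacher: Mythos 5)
Your proposal is correct and follows essentially the same route the paper takes: bound the local Lipschitz constant by the supremum over the ball of the dual ($l_q$) norm of the gradient, factor that gradient via the chain rule into $W_1D_1\cdots W_{d-1}D_{d-1}(\mathbf w_{\hat y}-\mathbf w_k)$, and apply submultiplicativity of the induced operator norms (with the transpose/duality identities at $q=2$ and $(p,q)=(\infty,1)$ matching the paper's constants $\|W_j'\|_F$ and $\|W_j'\|_{1,\infty}$). Your attention to the measure-zero kink set and to which operator norm $\|W_j\|_p$ denotes is appropriate but does not change the argument.
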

\vskip -0.08in

\begin{theorem}\label{theo:3} (The sparsity and robustness of nonlinear DNNs). 
Let the weight matrix be represented as $W_j=W'_j\circ M_j$, in which $\{M_j[u,v]\}$ are independent Bernoulli $B(1, 1-\alpha_j)$ random variables and $0\notin\{W'_j[u,v]\}$, for $j\in\{1,\ldots,d-1\}$. Then for any $\mathbf x \in \mathbb R^n$ and $k\in\{1,\ldots,c\}$, it holds that 
\begin{equation}
\mathrm{E}_{M_1,\ldots,M_{d-1}} (L_{2,\mathbf x}^k) \leq c_2\cdot(1-\eta(\alpha_1, \ldots, \alpha_{d-1}; \mathbf x))
\end{equation}
and
\begin{equation}
\mathrm{E}_{M_1,\ldots,M_{d-1}} (L_{1, \mathbf x}^k) \leq c_1\cdot(1-\eta(\alpha_1, \ldots ,\alpha_{d-1}; \mathbf x)),
\end{equation}
\vskip -0.05in
in which function $\eta$ is monotonically increasing w.r.t. each $\alpha_j$, $c_2=\| \mathbf w_{\hat{y}} - \mathbf w_k\|_2 \prod_j \| W_j' \|_F $ and $c_1=\| \mathbf w_{\hat{y}} - \mathbf w_k\|_1 \prod_j \| W_j' \|_{1,\infty}$ are two constants.
\end{theorem}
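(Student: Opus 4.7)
The starting point is Lemma~\ref{lemm:2}, which reduces the task to controlling
$$S_q(\mathbf x) := \sup_{\mathbf x'\in B_p(\mathbf x,R)} \prod_{j=1}^{d-1}\|D_j(\mathbf x')\|_p\,\|W_j\|_p$$
in expectation over the independent Bernoulli masks $M_1,\ldots,M_{d-1}$. Since $\|\mathbf w_{\hat y}-\mathbf w_k\|_q$ is treated as deterministic, it suffices to show $\mathrm{E}[S_q(\mathbf x)] \le \prod_j \|W_j'\|_{\star}\cdot(1-\eta(\bm\alpha;\mathbf x))$ with $\|\cdot\|_\star\in\{\|\cdot\|_F,\|\cdot\|_{1,\infty}\}$ depending on $q$, and $\eta$ monotone in each $\alpha_j$. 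The plan is to bound $S_q$ layer-by-layer, absorb the $D_j$ factors into masked-weight matrix norms, and then exploit the independence of the $M_j$'s across layers.

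For $q=2$ (so $p=2$), I would use the standard inequality $\|D_j(\mathbf x')W_j^T\|_2\le \|D_j(\mathbf x')W_j^T\|_F$ and expand
$$\|D_j(\mathbf x')W_j^T\|_F^{\,2}=\sum_u \mathbf{1}[u\text{ active at }\mathbf x']\sum_v M_j[v,u]\bigl(W_j'[v,u]\bigr)^{2}.$$
Taking the supremum over $\mathbf x'\in B_p(\mathbf x,R)$ yields the set $\mathcal A_j(\mathbf x)\subseteq\{1,\ldots,n_j\}$ of neurons that may be active on the ball (its complement encodes the $\mathbf x$--dependent activation sparsity). Upper-bounding the supremum inside each Frobenius norm by a sum restricted to $\mathcal A_j(\mathbf x)$ and using independence of the masks across layers together with Jensen's inequality on $\sqrt{\cdot}$, I obtain
$$\mathrm{E}\!\left[\prod_j \|D_j W_j^T\|_F\right]\le\prod_j \sqrt{(1-\alpha_j)\sum_{u\in\mathcal A_j(\mathbf x)}\|W_j'[:,u]\|_2^{\,2}}.$$
Dividing by $\prod_j\|W_j'\|_F$ identifies $1-\eta$ as this residual product, which is clearly decreasing in each $\alpha_j$; equivalently $\eta$ is monotonically increasing in each $\alpha_j$ and depends on $\mathbf x$ only through the active sets $\mathcal A_j(\mathbf x)$.

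For $q=1$ (so $p=\infty$) the same strategy applies with the induced $\infty\to\infty$ norm in place of the spectral norm: $\|D_j(\mathbf x')W_j^T\|_\infty$ is a maximum of row sums restricted to the active coordinates, so its expectation factors cleanly as $(1-\alpha_j)$ times the corresponding restricted $\|W_j'\|_{1,\infty}$, \emph{without} needing Jensen because the relevant quantity is a linear sum inside a max. Collecting terms across layers and factoring out $c_1=\|\mathbf w_{\hat y}-\mathbf w_k\|_1\prod_j\|W_j'\|_{1,\infty}$ again expresses the residual as $1-\eta(\bm\alpha;\mathbf x)$ with the same monotonicity property.

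The main obstacle is that the activation pattern $D_j(\mathbf x')$ is \emph{not} independent of the masks $M_1,\ldots,M_j$ driving it, so one cannot simply split $\mathrm{E}[\|D_j\|_p\cdot\|W_j\|_p]$ into a product of marginals. My plan handles this by first taking the supremum $\sup_{\mathbf x'\in B_p(\mathbf x,R)}$ pointwise in the mask realization, replacing each random $D_j$ by the deterministic active set $\mathcal A_j(\mathbf x)$ (a \emph{mask-independent} over-approximation obtained by enumerating which coordinates could conceivably fire somewhere on the ball around $\mathbf x$), and only then invoking independence across $j$. This step is where the $\mathbf x$-dependence of $\eta$ is inherited; the remaining work is routine linearity and Jensen, and the monotonicity of $\eta$ in each $\alpha_j$ follows because every layer's factor is an affine decreasing function of $\alpha_j$ holding the rest fixed.
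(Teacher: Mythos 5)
Your route diverges from the paper's in a way that matters: in the paper, $1-\eta$ is a \emph{probability} --- a recursive lower bound on the chance that the entire $(d-1)$-th layer is deactivated (each neuron dies when every incoming term is killed either by its Bernoulli mask, with probability $\alpha_{d-1}$, or by a dead input neuron, with probability $\xi_{d-2,u'}$), and the bound $\mathrm{E}(L^k_{q,\mathbf x})\leq c_q(1-\eta)$ comes from conditioning on the event $\|D_{d-1}\|_p\neq 0$. Your $1-\eta$ is instead a deterministic norm-shrinkage factor $\prod_j\sqrt{1-\alpha_j}$ (resp.\ $\prod_j(1-\alpha_j)$) extracted via Jensen from the masked weight norms. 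For $q=2$ your mechanism is essentially sound and in fact quantitatively much stronger than the paper's (their $\eta$ is astronomically small for realistic widths), modulo one fixable issue: Lemma~\ref{lemm:2} gives the product of the \emph{separate} norms $\|D_j\|_p\|W_j\|_p$, not the interleaved $\|D_jW_j^T\|_p$ you expand; you would need to rederive the Lipschitz bound from the Jacobian $(\mathbf w_{\hat y}-\mathbf w_k)^TD_{d-1}W_{d-1}^T\cdots D_1W_1^T$ to use the combined form. (You could also sidestep this by bounding $\|D_j\|_2\leq 1$ and $\|W_j\|_2\leq\|W_j\|_F$ and applying Jensen directly, which stays within the lemma.)

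The genuine gap is in your $q=1$ case. You assert that $\|D_jW_j^T\|_\infty$, being a maximum of row sums, has expectation equal to $(1-\alpha_j)$ times the restricted $\|W_j'\|_{1,\infty}$ ``without needing Jensen.'' But expectation and maximum do not commute in the direction you need: $\mathrm{E}[\max_u Y_u]\geq\max_u\mathrm{E}[Y_u]$, and the gap can eat the entire $(1-\alpha_j)$ factor. Concretely, take $n_j$ columns of $W_j'$ each containing a single entry of magnitude $1$; then $\|W_j'\|_{1,\infty}=1$ while $\mathrm{E}\bigl[\max_u\sum_{v}M_j[v,u]\,|W_j'[v,u]|\bigr]=1-\alpha_j^{\,n_j}\to 1$, not $1-\alpha_j$. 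So the $L^k_{1,\mathbf x}$ half of the theorem is not established by your argument, and the only factor you can legitimately extract from a max of masked row sums is the probability that not all relevant entries are masked --- i.e., exactly the deactivation-probability mechanism the paper uses. A lesser point: your mask-independent over-approximation $\mathcal A_j(\mathbf x)$ must cover every coordinate that can fire for \emph{some} mask realization and some $\mathbf x'$ in the ball, which for generic weights is all of $\{1,\ldots,n_j\}$; the $\mathbf x$-dependence of your $\eta$ then evaporates. That does not contradict the statement, but it means your bound is really $c_2\prod_j\sqrt{1-\alpha_j}$.
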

\begin{proofskt}
Function $\prod\|D_j(\cdot)\|_p\|W_j\|_p$ defined on $\mathbb R^n$ is bounded from above and below, thus we know there exists an $\hat{\mathbf x}\in B_p(\mathbf x, R)$ satisfying
\begin{equation}
L_{q,\mathbf x}^k \leq \|\mathbf w_{\hat y} - \mathbf w_k\|_q \prod_{j} \|D_j(\hat{\mathbf x})\|_p\|W_j\|_p.
\end{equation}
Particularly, $\prod\|D_j(\hat{\mathbf x})\|_p\neq 0$ is fulfilled iff $\|D_{d-1}(\hat{\mathbf x})\|_p\neq 0$ (i.e., it equals 1 for $q\in\{1,2\}$). 
Under the assumptions on $M_j$, we know that the entries of $W_j$ are independent of each other, thus 
\begin{equation}
\begin{aligned}
\mathrm{Pr}_{M_1,\ldots,M_{d-1}}(D_{d-1}(\hat{\mathbf x})[u,u]=0) &= \mathrm{Pr}_{M_1,\ldots,M_{d-1}}(W_{d-1}[:,u]^T \mathbf a_{d-2} \leq 0) \\
&\geq \prod_{u'} (\alpha_{d-1}+ \xi_{d-2, u'}-\alpha_{d-1}\xi_{d-2, u'}),
\end{aligned}
\end{equation}
in which $\xi_{d-2, u'}$ is a newly introduced scalar that equals or less equals to the probability of the $u'$-th neuron being deactivated. %
In this manner, we can recursively define the function $\eta$ and it is easy to validate its monotonicity. 
Additionally, we prove that $c_q\geq \| \mathbf w_{\hat{y}} - \mathbf w_k\|_q\mathrm{E}\,(\prod\|W_j\|_p|\|D_{d-1}(\hat{\mathbf x})\|_p=1)$ holds  for $q\in\{1,2\}$ and the result follows.
See the supplementary material for a detailed proof.
\end{proofskt}

In Lemma~\ref{lemm:2} we introduce probably smaller local Lipschitz constants than the commonly known ones (i.e., $c_2$ and $c_1$), and subsequently in Theorem~\ref{theo:3} we build theoretical relationships between $L_{q,\mathbf x}^k$ and the network sparsity, for $q\in\{1, 2\}$ (i.e., $p\in\{\infty, 2\}$).
Apparently, $L_{q,\mathbf x}^k$ is prone to get smaller if any weight matrix gets more sparse.
It is worthy noting that the local Lipschitz constant is of great importance in evaluating the robustness of DNNs, and it is effective to regularize DNNs by just minimizing $L_{q,\mathbf x}^k$, or equivalently $\|\nabla g_{\hat{y}}(\mathbf x) - \nabla g_k(\mathbf x)\|_q$ for differentiable continuous functions~\cite{Hein2017}.
Thus we reckon, when the network is over-parameterized, an appropriately higher weight sparsity implies a larger $\gamma$ and stronger robustness.
There are similar conclusions if $\mathbf a_j$ gets more sparse. 

Recall that in the linear binary case, we apply the DeepFool adversarial example $\tilde{\mathbf x}$ when evaluating the robustness using $r_2$. 
It is not difficult to validate that the equality $d(\mathbf x, \tilde{\mathbf x})=|( \mathbf w_{\hat{y}}-\mathbf w_{k\neq\hat{y}})^T \mathbf x|/L^k_{2,\mathbf x}$ holds for such $\tilde{\mathbf x}$ and $\mathbf w_{\pm1} := \pm \mathbf w$, which means the DeepFool perturbation ideally minimizes the Euclidean norm and helps us measure a lower bound in this regard.
This can be directly generalized to multi-class classifiers.
Unlike $r_2$ which represents a margin, our $r_\infty$ is basically an expected accuracy.
Nevertheless, we also know that a perturbation of $-\epsilon y \cdot\mathrm{sgn}(\mathbf w)$ shall successfully fool the classifiers if $\epsilon\geq|( \mathbf w_{\hat{y}}-\mathbf w_{k\neq\hat{y}})^T \mathbf x|/L^k_{1,\mathbf x}$.

\section{Experimental Results}\label{sec:exp}

In this section, we conduct experiments to testify our theoretical results.
To be consistent, we still start from linear models and turn to nonlinear DNNs afterwards. 
As previously discussed, we perform both $l_\infty$ and $l_2$ attacks on the classifiers to evaluate their adversarial robustness. 
In addition to the FGS~\cite{Goodfellow2015} and DeepFool~\cite{Moosavi2016} attacks which have been thoroughly discussed in Section~\ref{sec:sob}, we introduce two more attacks in this section for extensive comparisons of the model robustness. 

\paragraph{Adversarial attacks.} We use the FGS and randomized FGS (rFGS)~\cite{Tramer2018} methods to perform $l_\infty$ attacks. 
As a famous $l_\infty$ attack, FGS has been widely exploited in the literature.
In order to generate adversarial examples, it calculates the gradient of training loss w.r.t. benign inputs and uses its sign as perturbations, in an element-wise manner. 
The rFGS attack is a computationally efficient alternative to multi-step $l_\infty$ attacks with an ability of breaking adversarial training-based defences. 
We keep its hyper-parameters fixed for all experiments in this paper.
For $l_2$ attacks, we choose DeepFool and the C\&W's attack~\cite{Carlini2017}. 
DeepFool linearises nonlinear classifiers locally and approximates the optimal perturbations iteratively. 
C\&W's method casts the problem of constructing adversarial examples as optimizing an objective function without constraints, such that some recent gradient-descent-based solvers can be adopted.
On the base of different attacks, four $r_2$ and $r_\infty$ values can be calculated for each classification model.

\begin{figure}[t]
	\centering \vskip -0.07in
	\begin{subfigure}[b]{0.23\linewidth}
		\includegraphics[width=\linewidth]{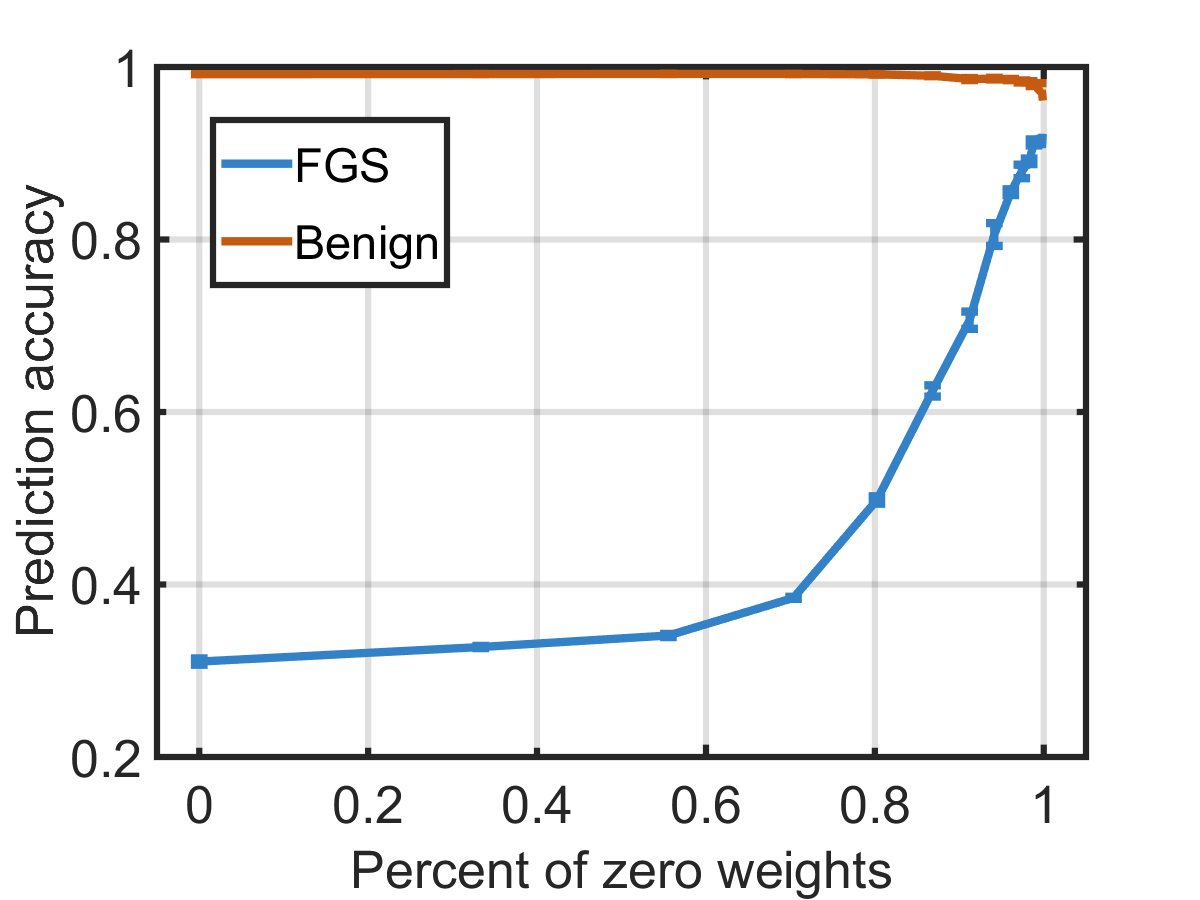}
		\caption{}
	\end{subfigure}
	\begin{subfigure}[b]{0.23\linewidth}
		\includegraphics[width=\linewidth]{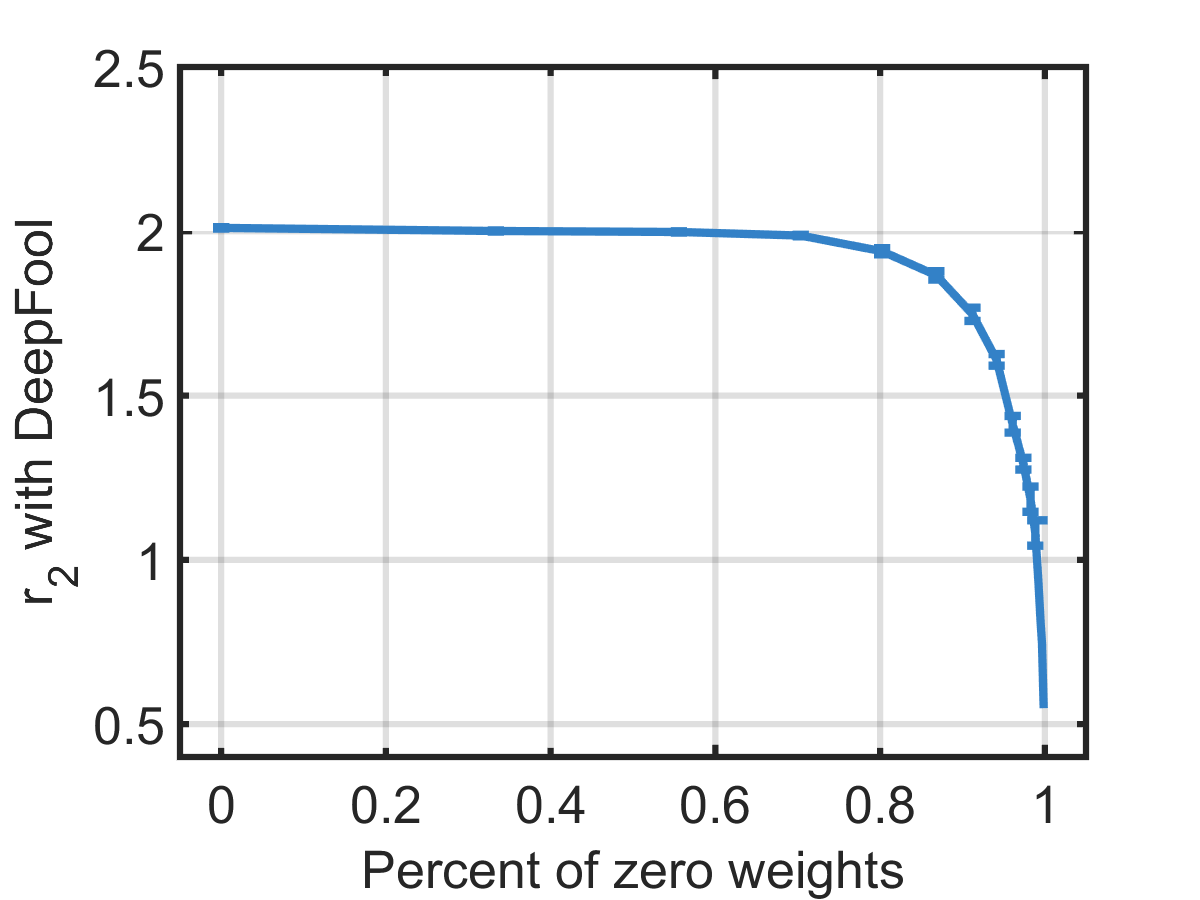}
		\caption{}
	\end{subfigure}
	\begin{subfigure}[b]{0.23\linewidth}
		\includegraphics[width=\linewidth]{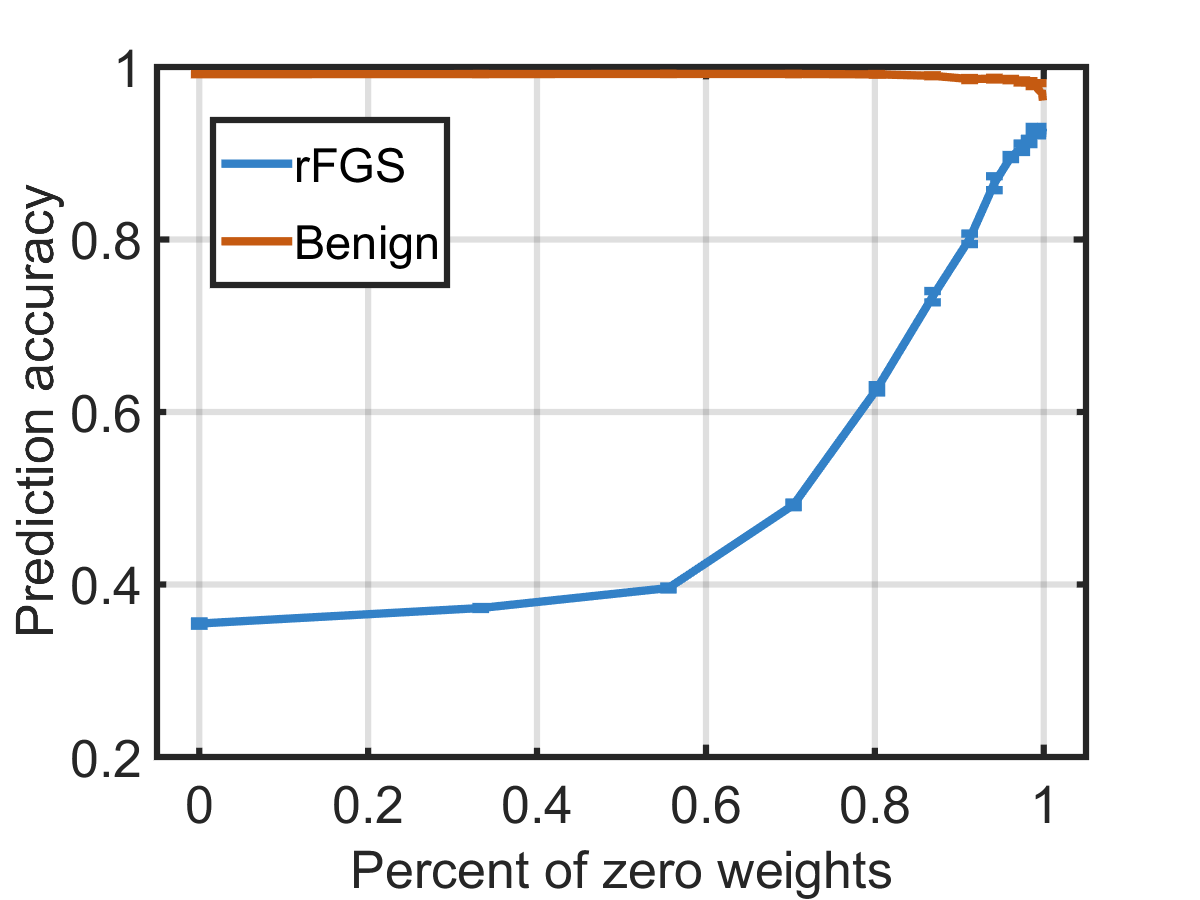}
		\caption{}
	\end{subfigure}
	\begin{subfigure}[b]{0.23\linewidth}
		\includegraphics[width=\linewidth]{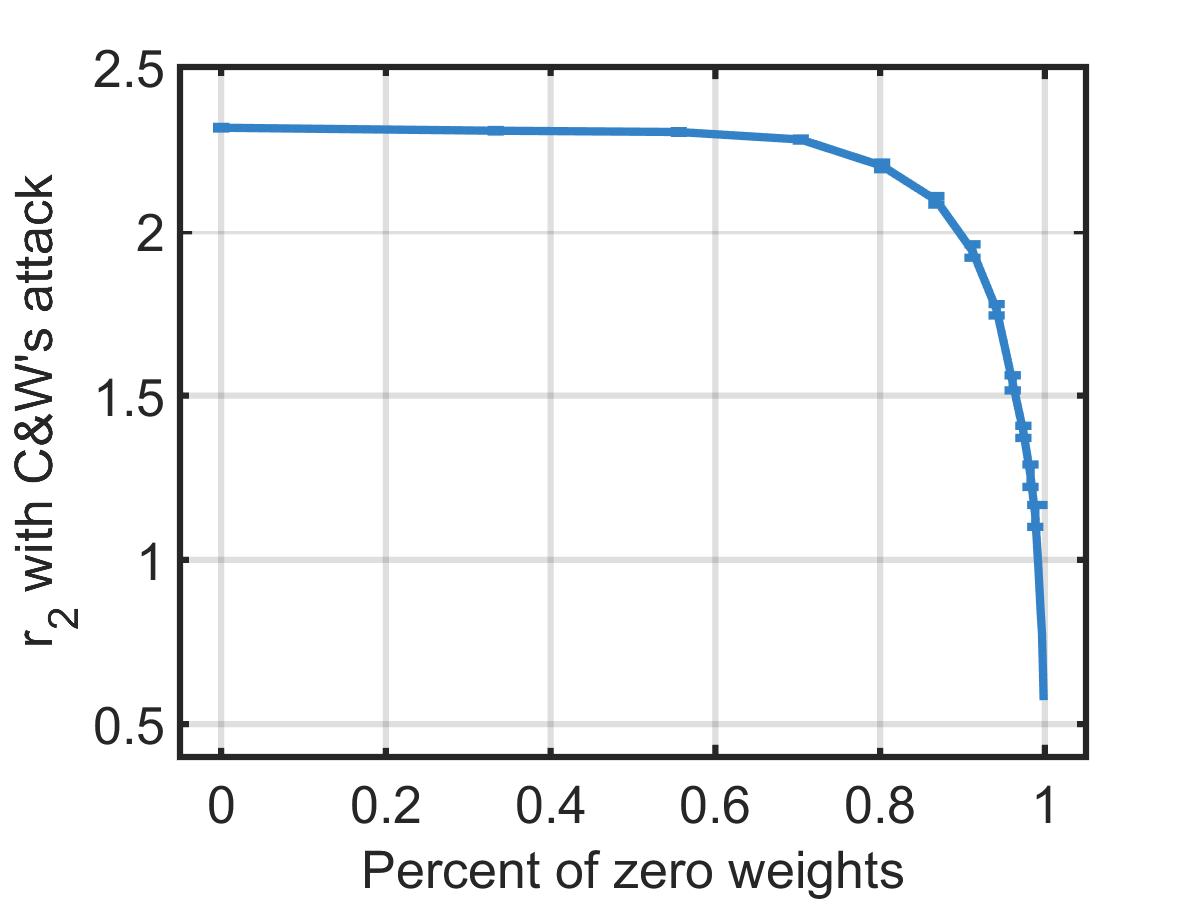}
		\caption{}
	\end{subfigure}

	\begin{subfigure}[b]{0.23\linewidth}
		\includegraphics[width=\linewidth]{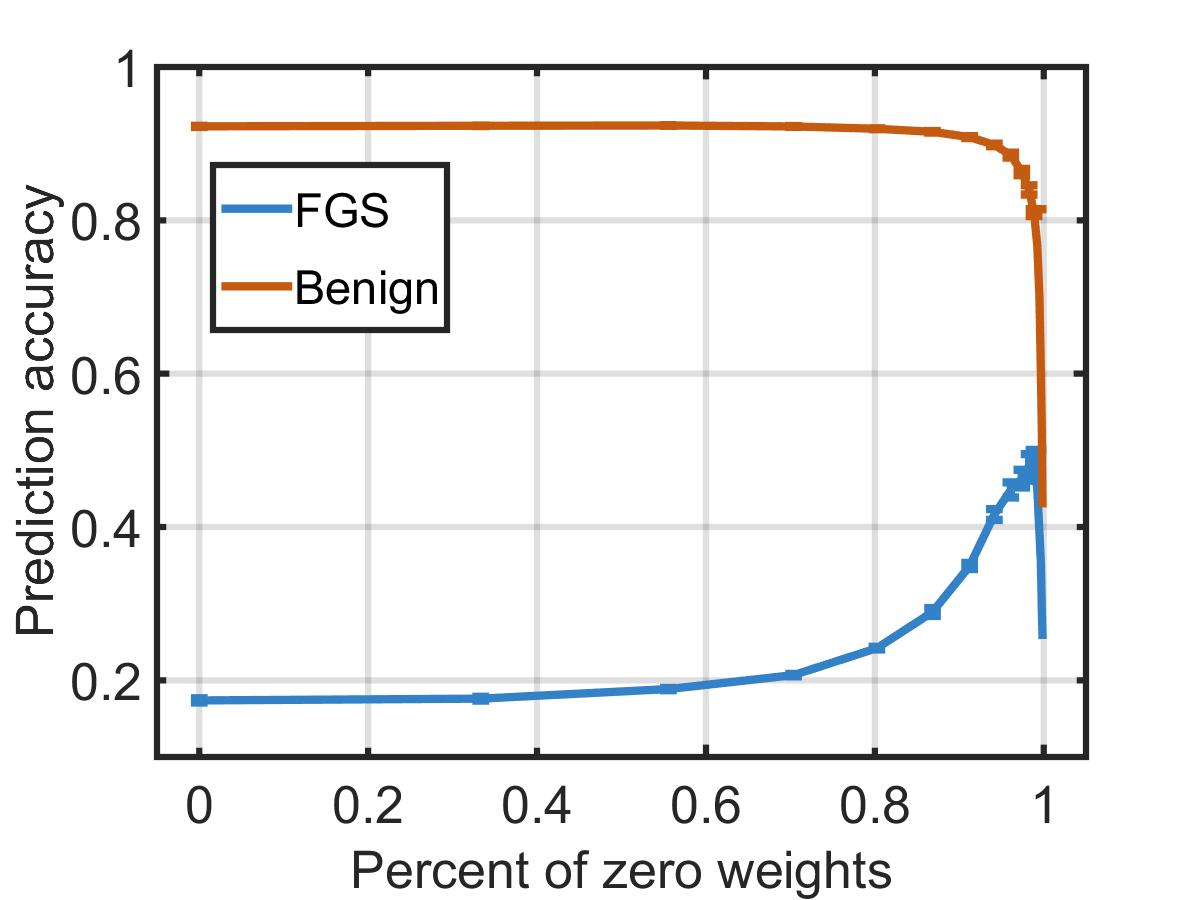}
		\caption{}
	\end{subfigure}
	\begin{subfigure}[b]{0.23\linewidth}
		\includegraphics[width=\linewidth]{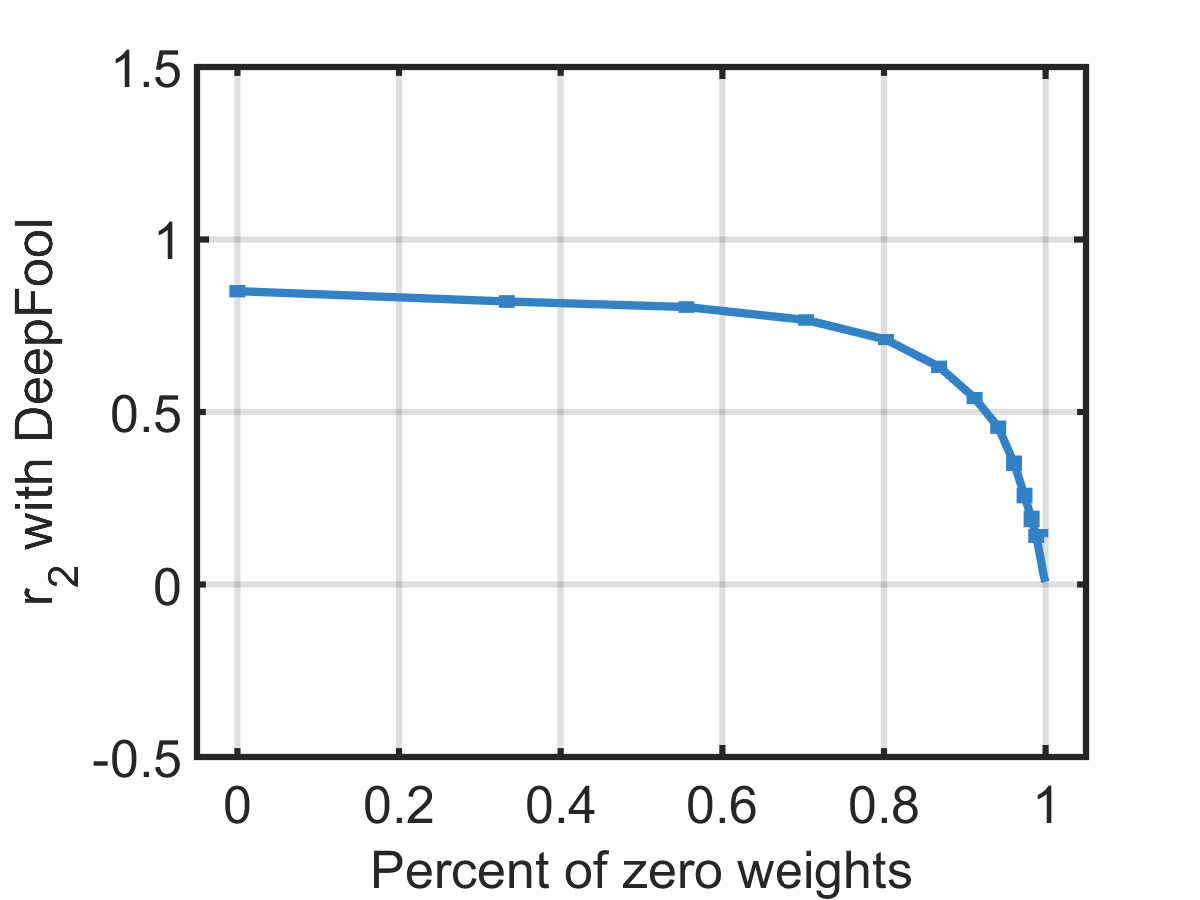}
		\caption{}
	\end{subfigure}
	\begin{subfigure}[b]{0.23\linewidth}
		\includegraphics[width=\linewidth]{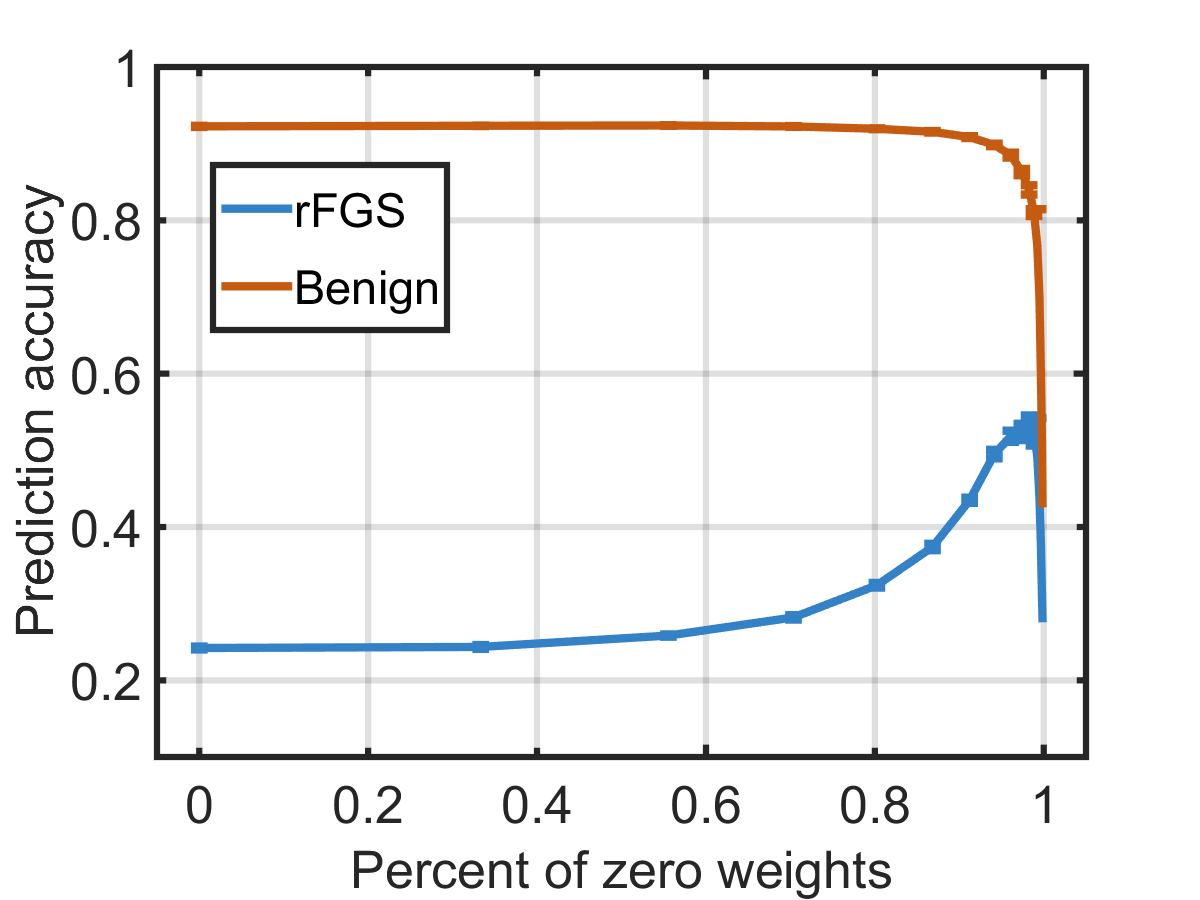}
		\caption{}
	\end{subfigure}
	\begin{subfigure}[b]{0.23\linewidth}
		\includegraphics[width=\linewidth]{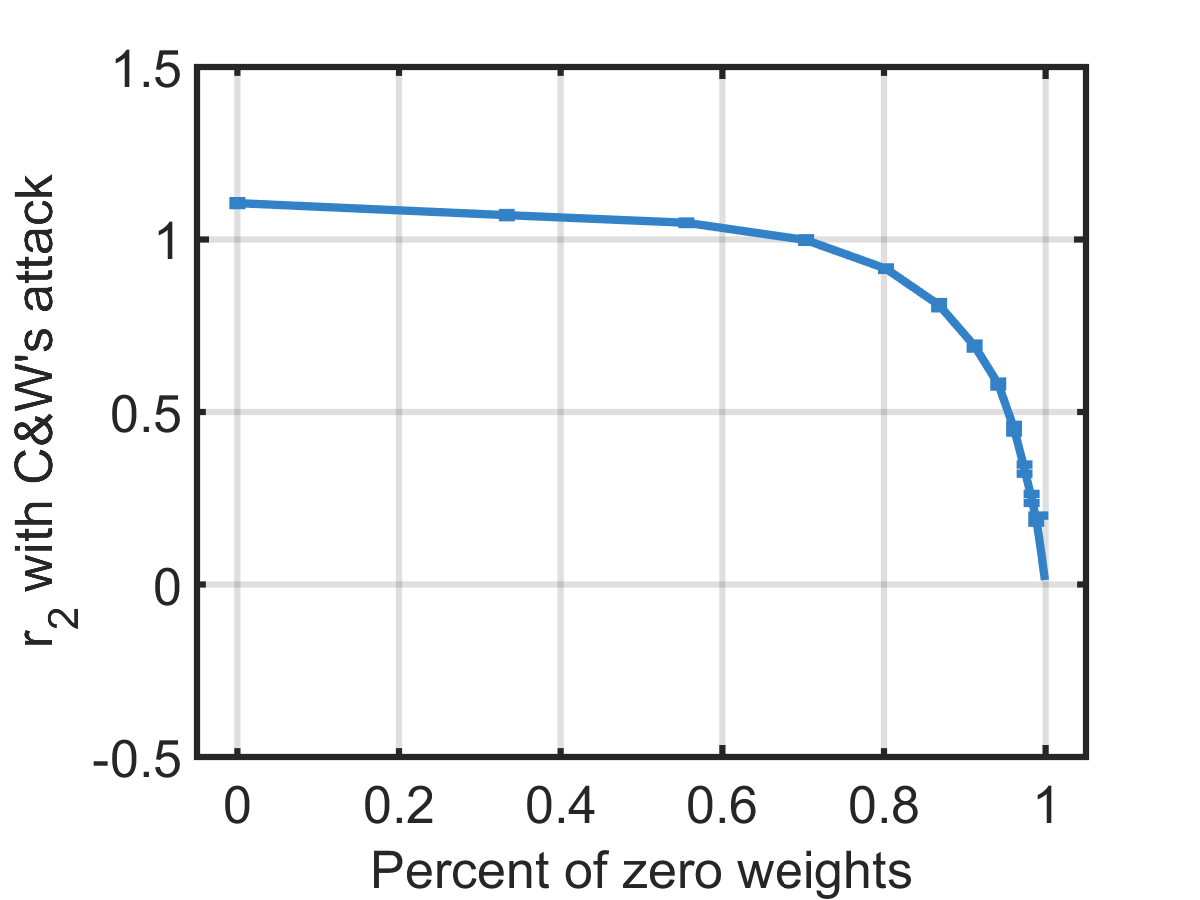}
		\caption{}
	\end{subfigure}

	\caption{The robustness of linear classifiers with varying weight sparsity. Upper: binary classification between ``1''s and ``7''s, lower: multi-class classification on the whole MNIST test set.}\vskip -0.1in
\label{fig:lin}
\end{figure}

\subsection{The Sparse Linear Classifier Behaves Differently under $l_\infty$ and $l_2$ Attacks}\label{subsec:explin}

In our experiments on linear classifiers, both the binary and multi-class scenarios shall be evaluated.
We choose the well-established MNIST dataset as a benchmark, which consists of 70,000 $28\times28$ images of handwritten digits. 
According to the official test protocol, 10,000 of them should be used for performance evaluation and the remaining 60,000 for training.
For experiments on the binary cases, we randomly choose a pair of digits (e.g., ``0'' and ``8'' or ``1'' and ``7'') as positive and negative classes.
Linear classifiers are trained following our previous discussions and utilizing the softplus function: $\min_{\mathbf w, b} \sum_i \log(1+\exp(-y_i(\mathbf w^T\mathbf x_i+b)))$. 
Parameters $\mathbf w$ and $b$ are randomly initialized and learnt by means of stochastic gradient descent with momentum.
For the ``1'' and ``7'' classification case, we train 10 reference models from different initializations and achieve a prediction accuracy of $99.17\pm 0.00\%$ on the benign test set. 
For the classification of all 10 classes, we train 10 references similarly and achieve a test-set accuracy of $92.26\pm0.08\%$.

To produce models with different weight sparsities, we use a progressive pruning strategy~\cite{Han2015}.
That being said, we follow a pipeline of iteratively pruning and re-training.
Within each iteration, a portion ($\rho$) of nonzero entries of $\mathbf w$, whose magnitudes are relatively small in comparison with the others, will be directly set to zero and shall never be activated again. 
After $m$ times of such ``pruning'', we shall collect $10(m+1)$ models from all 10 dense references. 
Here we set $m=16, \rho=1/3$ so the achieved final percentage of zero weights should be $99.74\%\approx1-(1-\rho)^m$.
We calculate the prediction accuracies on adversarial examples (i.e., $r_\infty$) under different $l_\infty$ attacks and the average Euclidean norm of required perturbations (i.e., $r_2$) under different $l_2$ attacks to evaluate the adversarial robustness of different models in practice.
For $l_\infty$ attacks, we set $\epsilon=0.1$.

Figure~\ref{fig:lin} illustrates how our metrics of robustness vary with the weight sparsity. 
We only demonstrate the variability of the first 12 points (from left to right) on each curve, to make the bars more resolvable.
The upper and lower subfigures correspond to binary and multi-class cases, respectively.
Obviously, the experimental results are consistent with our previous theoretical ones.
While sparse linear models are prone to be more robust in the sense of $r_\infty$, their $r_2$ robustness maintains similar or becomes even slightly weaker than the dense references, until there emerges inevitable accuracy degradations on benign examples (i.e., when $r_\infty$ may drop as well).
We also observe from Figure~\ref{fig:lin} that, in both the binary and multi-class cases, $r_2$ starts decreasing much earlier than the benign-set accuracy.
Though very slight in the binary case, the degradation of $r_2$ actually occurs after the first round of pruning (from $2.0103\pm0.0022$ to $2.0009\pm0.0016$ with DeepFool incorporated, and from $2.3151\pm0.0023$ to $2.3061\pm0.0023$ with the C\&W's attack).

\begin{figure}[t]
	\centering \vskip -0.07in
	\begin{subfigure}[b]{0.23\linewidth}
		\includegraphics[width=\linewidth]{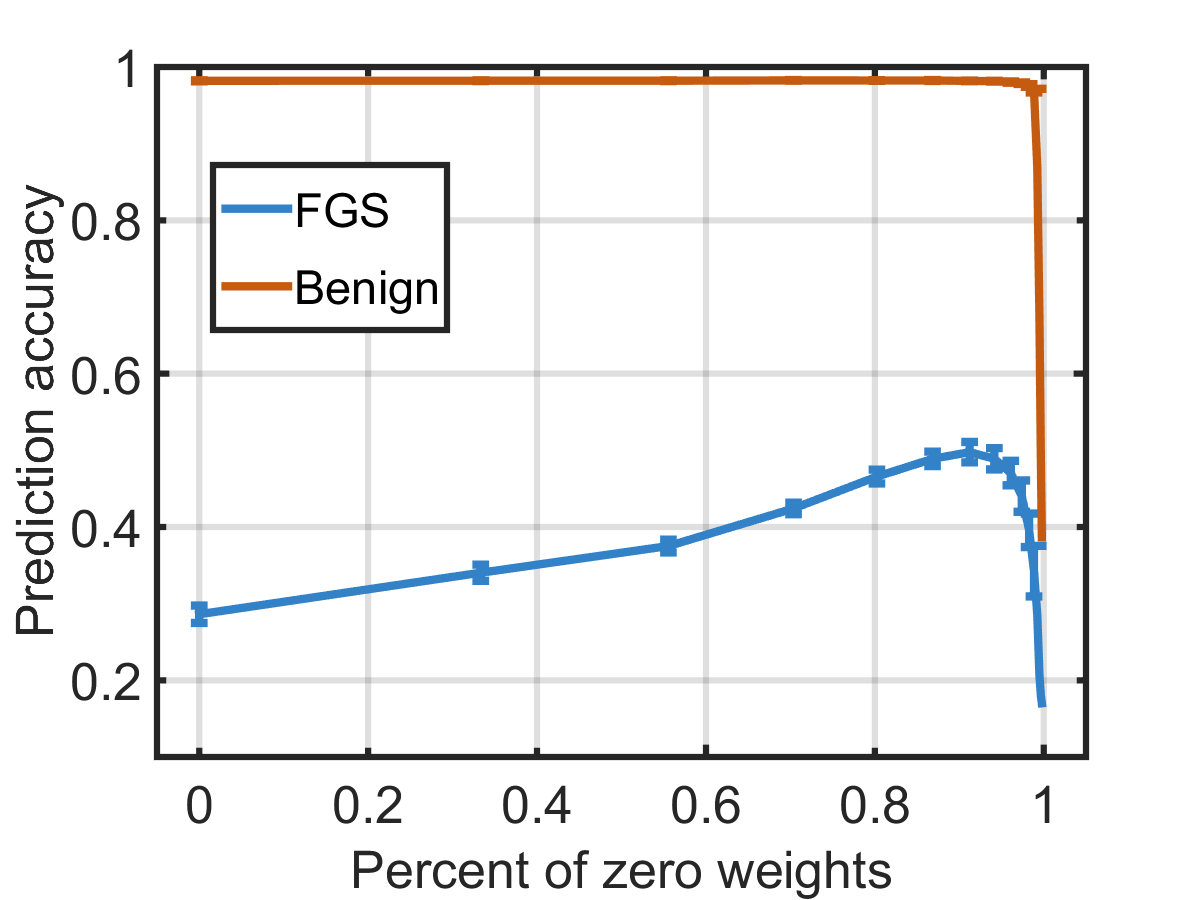}
		\caption{}
	\end{subfigure}
	\begin{subfigure}[b]{0.23\linewidth}
		\includegraphics[width=\linewidth]{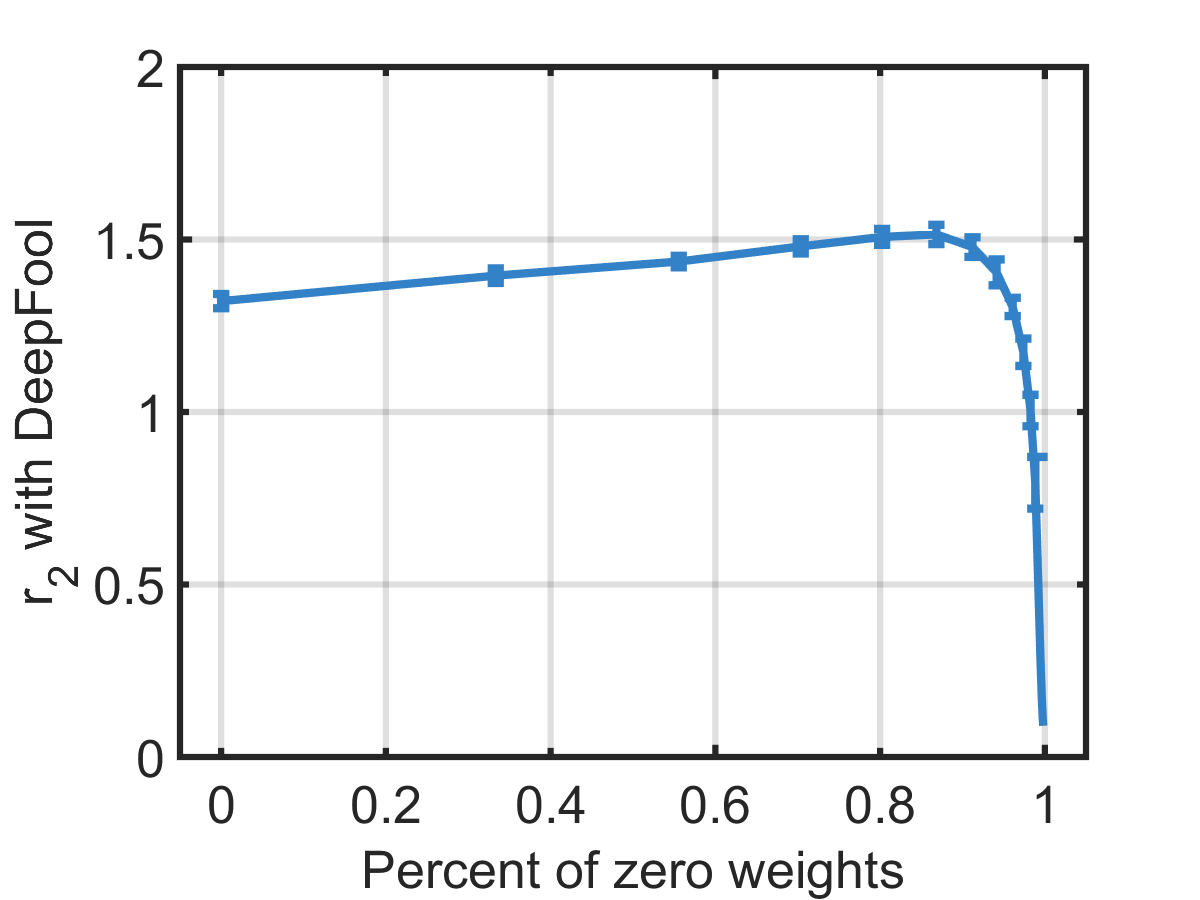}
		\caption{}
	\end{subfigure}
	\begin{subfigure}[b]{0.23\linewidth}
		\includegraphics[width=\linewidth]{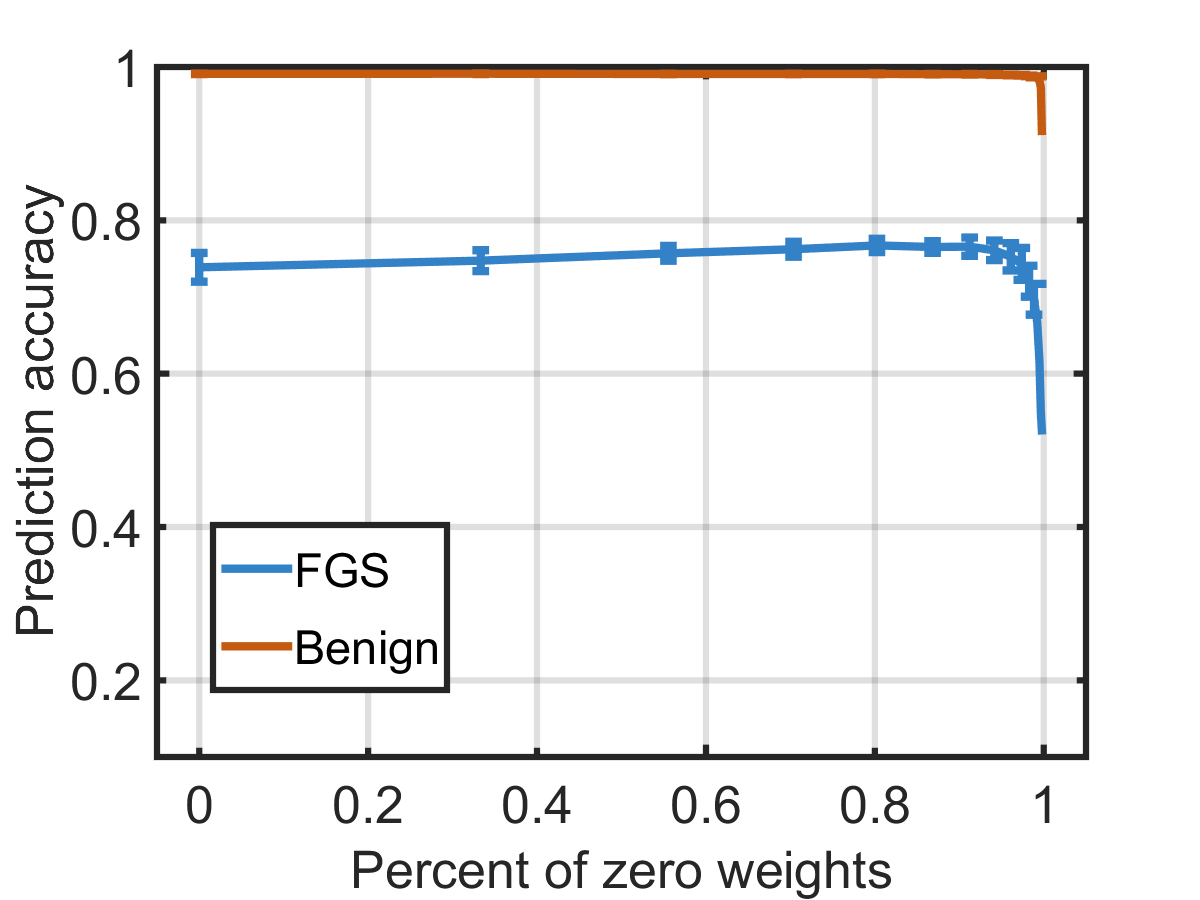}
		\caption{}
	\end{subfigure}
	\begin{subfigure}[b]{0.23\linewidth}
		\includegraphics[width=\linewidth]{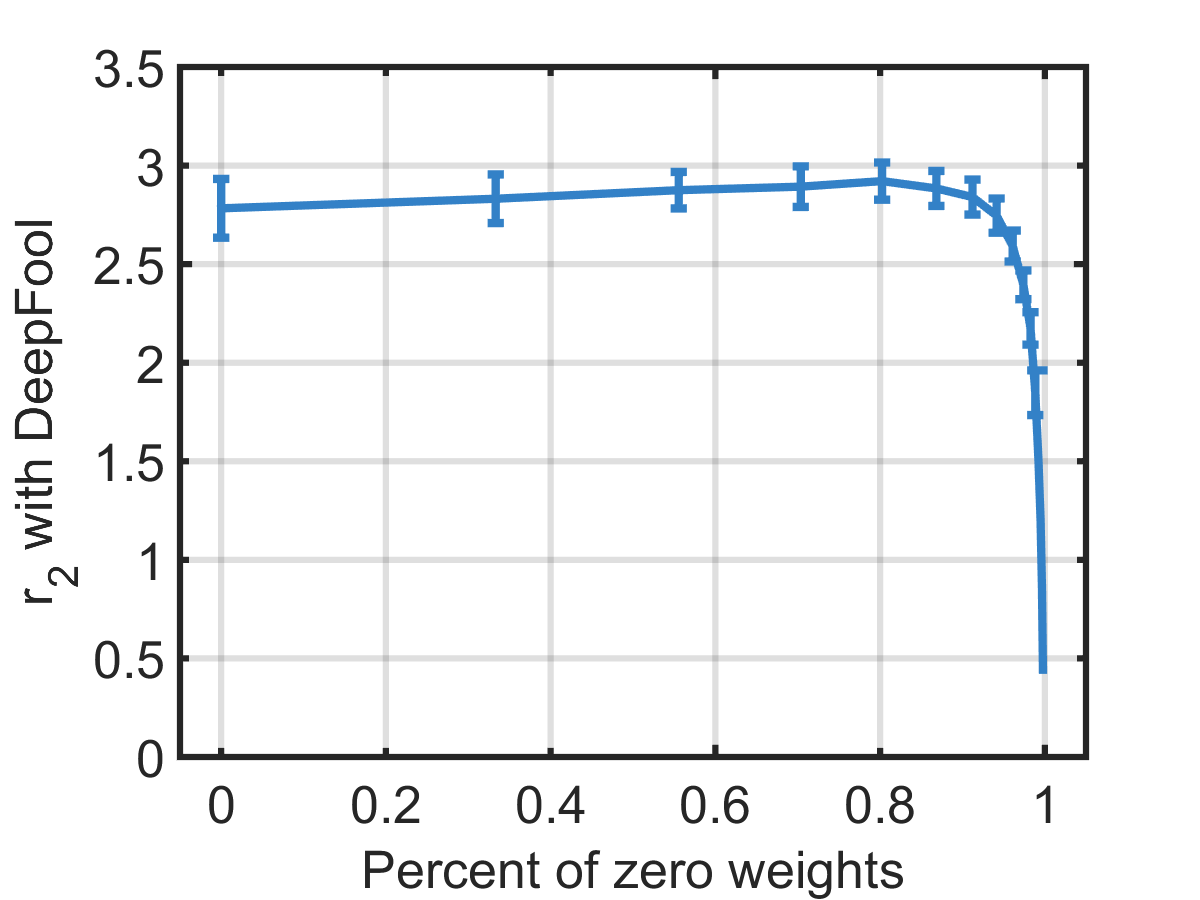}
		\caption{}
	\end{subfigure}

	\begin{subfigure}[b]{0.23\linewidth}
		\includegraphics[width=\linewidth]{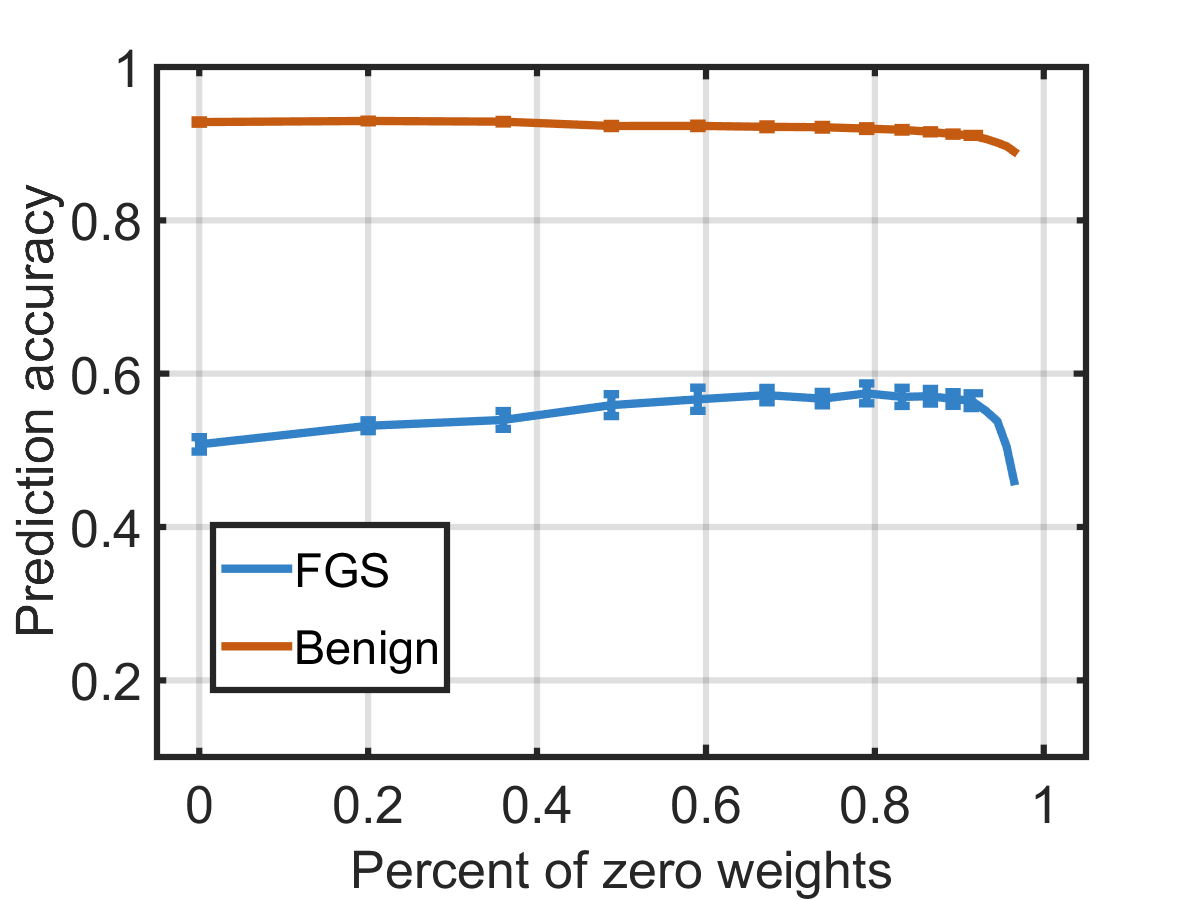}
		\caption{}
	\end{subfigure}
	\begin{subfigure}[b]{0.23\linewidth}
		\includegraphics[width=\linewidth]{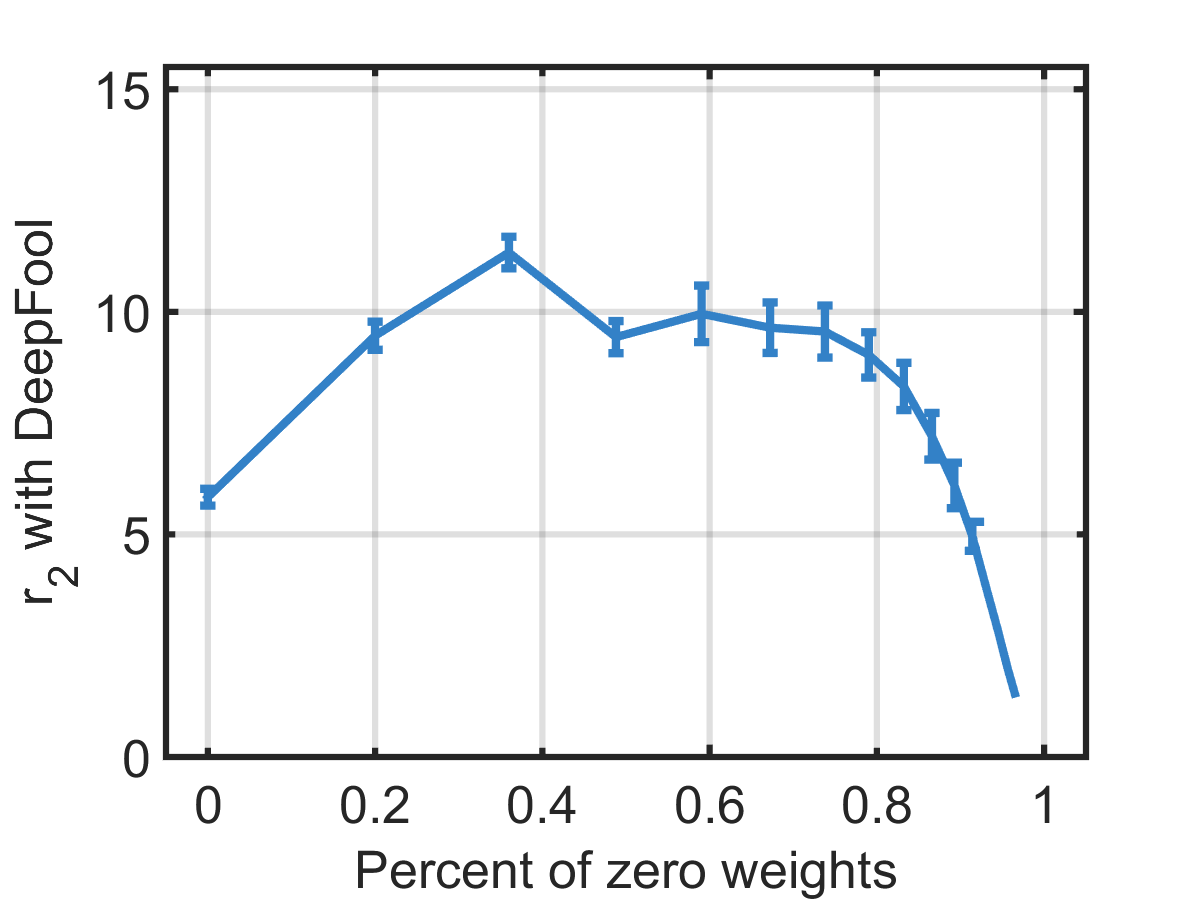}
		\caption{}
	\end{subfigure}
	\begin{subfigure}[b]{0.23\linewidth}
		\includegraphics[width=\linewidth]{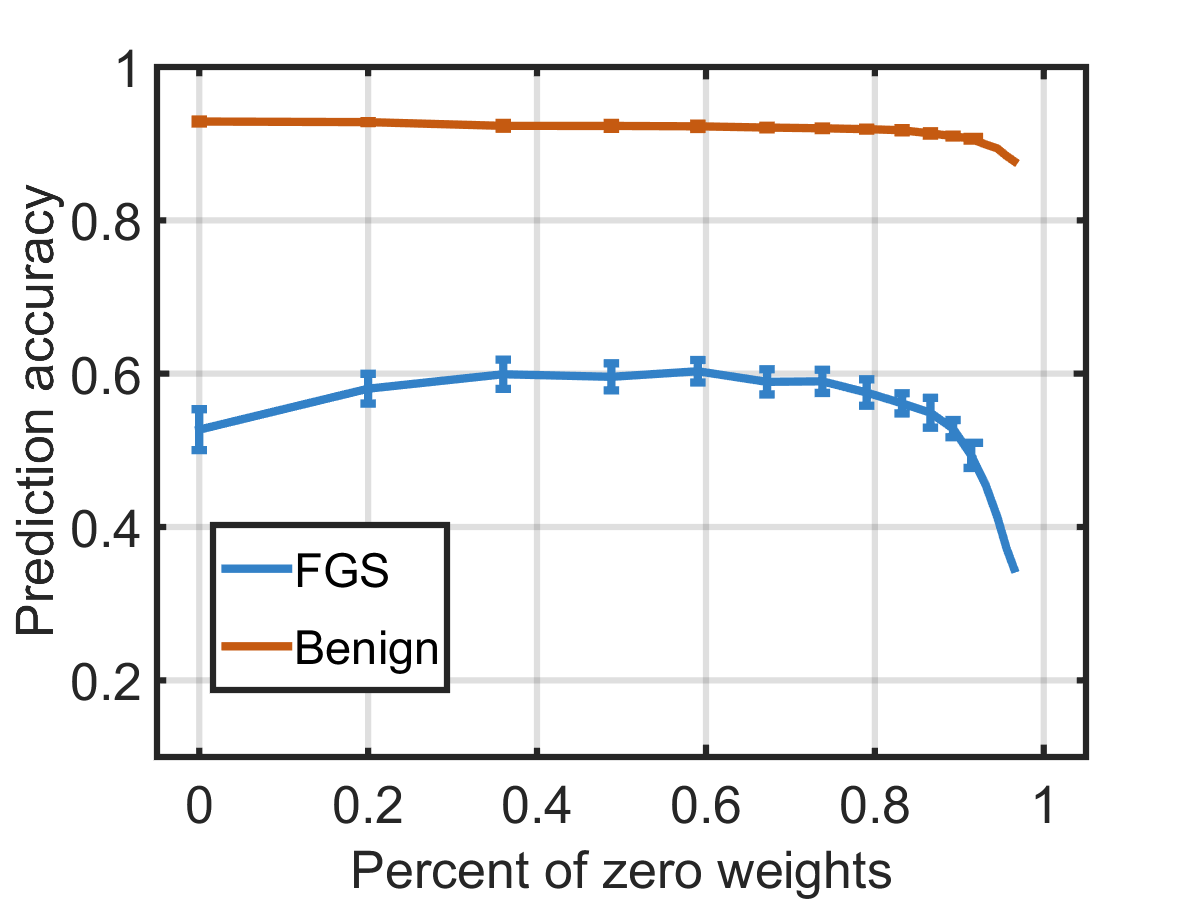}
		\caption{}
	\end{subfigure}
	\begin{subfigure}[b]{0.23\linewidth}
		\includegraphics[width=\linewidth]{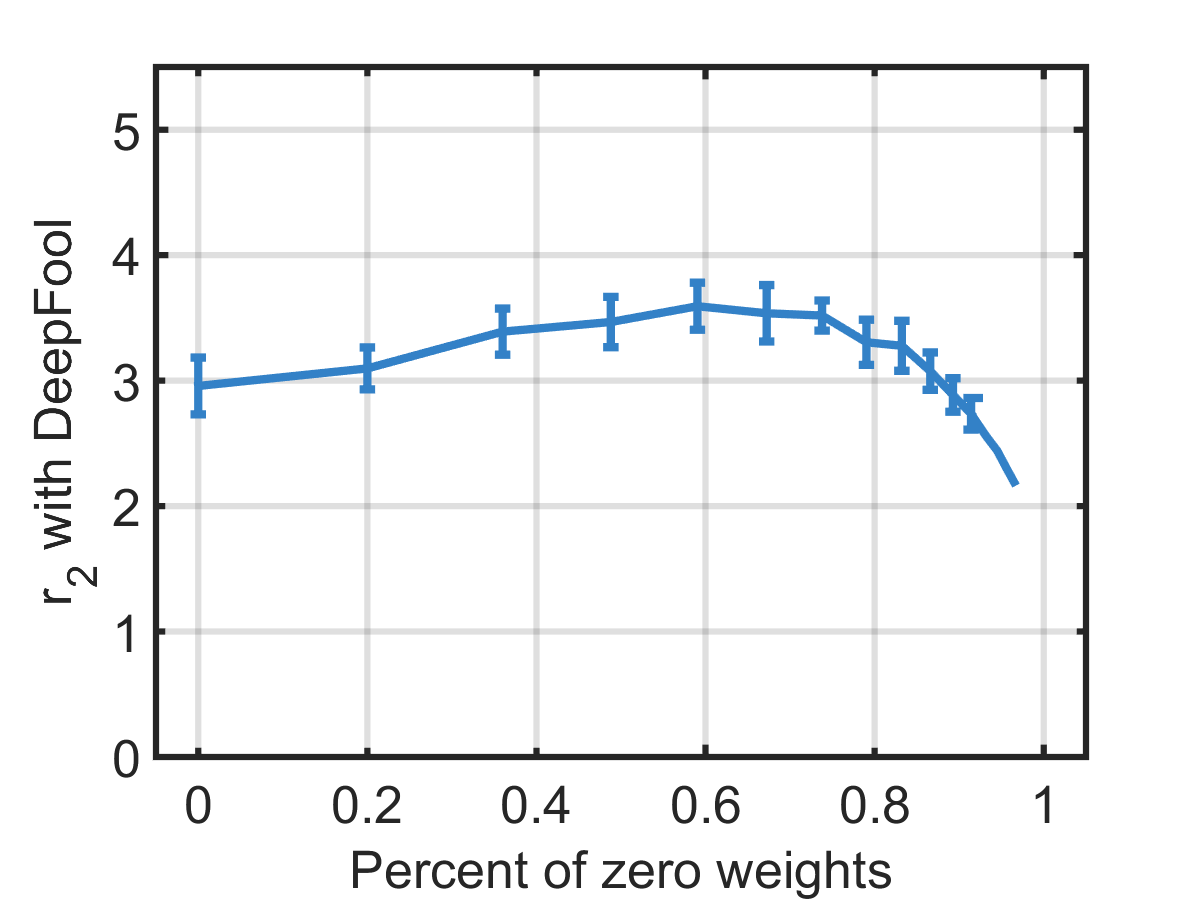}
		\caption{}
	\end{subfigure}

	\caption{The robustness of nonlinear DNNs with varying weight sparsity. (a)-(b): LeNet-300-100, (c)-(d): LeNet-5, (e)-(f): the VGG-like network, (g)-(h): ResNet-32.}\vskip -0.15in
\label{fig:dnn}
\end{figure}

\subsection{Sparse Nonlinear DNNs Can be Consistently More Robust}\label{subsec:expdnn}

Regarding nonlinear DNNs, we follow the same experimental pipeline as described in Section~\ref{subsec:explin}.
We train MLPs with 2 hidden fully-connected layers and convolutional networks with 2 convolutional layers, 2 pooling layers and 2 fully-connected layers as references on MNIST, following the ``LeNet-300-100'' and ``LeNet-5'' architectures in network compression papers~\cite{Han2015, Guo2016, Ullrich2017, Molchanov2017}. 
We also follow the training policy suggested by Caffe~\cite{Jia2014} and train network models for 50,000 iterations with a batch size of 64 such that the training cross-entropy loss does not decrease any longer. 
The well-trained reference models achieve much higher prediction accuracies (LeNet-300-100: $98.20\pm0.07\%$ and LeNet-5: $99.11\pm0.04\%$) than previous tested linear ones on the benign test set.

\paragraph{Weight sparsity.} Then we prune the dense references and illustrate some major results regarding the robustness and weight sparsity in Figure~\ref{fig:dnn} (a)-(d). (See Figure 3 in our supplementary material for results under rFGS and the C\&W's attack.) 
Weight matrices/tensors within each layer is uniformly pruned so the network sparsity should be approximately equal to the layer-wise sparsity.
As expected, we observe similar results to previous linear cases in the context of our $r_\infty$ but significantly different results in $r_2$.
Unlike previous linear models which behave differently under $l_\infty$ and $l_2$ attacks, nonlinear DNN models show a consistent trend of adversarial robustness with respect to the sparsity. 
In particular, we observe increased $r_\infty$ and $r_2$ values under different attacks when continually pruning the models, until the sparsity reaches some thresholds and leads to inevitable capacity degradations.
For additional verifications, we calculate the CLEVER~\cite{Weng2018} scores that approximate attack-agnostic lower bounds of the $l_p$ norms of required perturbations (in Table 3 in the supplementary material).

Experiments are also conducted on CIFAR-10, in which deeper nonlinear networks can be involved.
We train 10 VGG-like network models~\cite{Neklyudov2017} (each incorporates 12 convolutional layers and 2 fully-connected layers) and 10 ResNet models~\cite{He2016} (each incorporates 31 convolutional layers and a single fully-connected layers) from scratch.
Such deep architectures lead to average prediction accuracies of $93.01\%$ and $92.89\%$.
Still, we prune dense network models in the progressive manner and illustrate quantitative relationships between the robustness and weight sparsity in Figure~\ref{fig:dnn} (e)-(h).
The first and last layers in each network are kept dense to avoid early accuracy degradation on the benign set. 
The same observations can be made.
Note that the ResNets are capable of resisting some DeepFool examples, for which the second and subsequent iterations make little sense and can be disregarded. 

\paragraph{Activation sparsity.} Having testified relationship between the robustness and weight sparsity of nonlinear DNNs, we now examine the activation sparsity.
As previously mentioned, the middle-layer activations of ReLU incorporated DNNs are naturally sparse.
We simply add a $l_1$ norm regularization of activation matrices/tensors to the learning objective to encourage higher sparsities and calculate $r_\infty$ and $r_2$ accordingly.
Experiments are conducted on MNIST. 
Table~\ref{tab:act} summarizes the results, in which ``Sparsity ($\mathbf a$)'' indicates the percentage of deactivated (i.e., zero) neurons feeding to the last fully-connected layer.
Here the $r_\infty$ and $r_2$ values are calculated using the FGS and DeepFool attacks, respectively. 
Apparently, we still observe positive correlations between the robustness and (activation) sparsity in a certain range.

\begin{table}[h]
\vspace{-0.05in}
\caption{The robustness of DNNs regularized using the $l_1$ norm of activation matrices/tensors.}\label{tab:act}
\begin{center}{
\begin{tabular}{llllll}
\toprule
Network & $r_\infty$ & $r_2$ & Accuracy & Sparsity ($\mathbf a$) \\
\midrule
                            & 0.2862$\pm$0.0113 & 1.3213$\pm$0.0207 & 98.20$\pm$0.07\% & 45.25$\pm$1.14\% \\  
LeNet-300-100 & {\bf0.3993$\pm$0.0079} & {\bf1.5903$\pm$0.0240} & {\bf98.27$\pm$0.04\%} & 75.92$\pm$0.54\% \\ 
                           & 0.2098$\pm$0.0133 & 1.1440$\pm$0.0402 & 97.96$\pm$0.07\% & 95.22$\pm$0.18\% \\ \midrule
                           & 0.7388$\pm$0.0188 & 2.7831$\pm$0.1490 & 99.11$\pm$0.04\% & 51.26$\pm$1.88\% \\   
LeNet-5            & {\bf0.7729$\pm$0.0081} & {\bf3.1688$\pm$0.1203} & {\bf99.19$\pm$0.05\%} & 97.54$\pm$0.10\% \\ 
                          & 0.6741$\pm$0.0162 & 2.0799$\pm$0.0522 & 99.10$\pm$0.06\% & 99.64$\pm$0.02\% \\ \bottomrule
\end{tabular}}
\end{center}
\end{table}

\subsection{Avoid ``Over-pruning'' }

We discover from Figure~\ref{fig:dnn} that the sharp decrease of the adversarial robustness, especially in the sense of $r_2$, may occur in advance of the benign-set accuracy degradation.
Hence, it can be necessary to evaluate the adversarial robustness of DNNs during an aggressive surgery, even though the prediction accuracy of compressed models may remain competitive with their references on benign test-sets.
To further explore this, we collect some off-the-shelf sparse models (including a $56\times$ compressed LeNet-300-100 and a $108\times$ compressed LeNet-5)~\cite{Guo2016} and their corresponding dense references from the Internet and hereby evaluate their $r_\infty$ and $r_2$ robustness.
Table~\ref{tab:dns} compares the robustness of different models.
Obviously, these extremely sparse models are more vulnerable to the DeepFool attack, and what's worse, the over $100\times$ pruned LeNet-5 seems also more vulnerable to FGS, which suggests researchers to take care and avoid ``over-pruning'' if possible. 
One might also discover the fact with other pruning methods.

\begin{table}[h]
\vspace{-0.12in}
\caption{The robustness of pre-compressed nonlinear DNNs and their provided dense references.}\label{tab:dns}
\begin{center}{
\begin{tabular}{lllll}
\toprule
Model & $r_\infty$ & $r_2$ & Sparsity ($W$) \\
\midrule
LeNet-300-100 dense  & 0.2663 & {\bf1.3899} & 0.00\% \\   
LeNet-300-100 sparse & {\bf0.3823} & 1.1058 & 98.21\% \\  \midrule
LeNet-5 dense              & {\bf0.7887} & {\bf2.7226} & 0.00\% \\   
LeNet-5 sparse             & 0.6791 & 1.7383  & 99.07\% \\ \bottomrule
\end{tabular}}
\end{center}
\vspace{-0.1in}
\end{table}


\section{Conclusions}
In this paper, we study some intrinsic relationships between the adversarial robustness and the sparsity of classifiers, both theoretically and empirically.
By introducing plausible metrics, we demonstrate that unlike some linear models which behave differently under $l_\infty$ and $l_2$ attacks, sparse nonlinear DNNs can be consistently more robust to both of them than their corresponding dense references, until their sparsity reaches certain thresholds and inevitably causes harm to the network capacity. 
Our results also demonstrate that such sparsity, including sparse connections and middle-layer neuron activations, can be effectively imposed using network pruning and $l_1$ regularization of weight tensors.

\section*{Acknowledgement}
We would like to thank anonymous reviewers for their constructive suggestions. Changshui Zhang is supported by NSFC (Grant No. 61876095, No. 61751308 and  No. 61473167) and Beijing Natural Science Foundation (Grant No. L172037).
\small

\bibliographystyle{plain}
\bibliography{egbib}

\end{document}